\let\cite\shortcite
\theoremstyle{definition}
\newsavebox\curwrapfig
\let\NAT@parse\undefined
\long\def\wrapfiguresafe#1#2#3{%
  \sbox\curwrapfig{#3}%
  \par\penalty-100%
  \begingroup % preserve \dimen@
    \dimen@\pagegoal \advance\dimen@-\pagetotal % space left
    \advance\dimen@-\baselineskip % allow an extra line
    \ifdim \ht\curwrapfig>\dimen@ % not enough space left
      \break%
    \fi%
  \endgroup%
  \begin{wrapfigure}{#1}{#2}%
    \usebox\curwrapfig%
  \end{wrapfigure}%
}
\useunder{\uline}{\ul}{}
\def\one{\mbox{1\hspace{-4.25pt}\fontsize{12}{14.4}\selectfont\textrm{1}}}
\newtcolorbox{mybox}[3][]
{
  colframe = #2!25,
  colback  = #2!10,
  coltitle = #2!20!black,  
  title    = {#3},
  #1,
}
\DeclareFixedFont{\ttb}{T1}{txtt}{bx}{n}{9.5} % for bold
\DeclareFixedFont{\ttm}{T1}{txtt}{m}{n}{9.5}  % for normal
\definecolor{codeblue}{rgb}{0,0,0.6}
\definecolor{codegreen}{rgb}{0,0.6,0}
\definecolor{dark-blue}{rgb}{0.15,0.15,0.4}
\definecolor{codepurple}{rgb}{0.6,0,0.6}
\newcommand\pythonstyle{\lstset{
    language=Python,
    basicstyle=\scriptsize\ttfamily,
    otherkeywords={self,with},             
    keywordstyle=\color{codepurple},
    emph={__init__, dim, None},
    emphstyle=\color{codeblue},
    stringstyle=\color{codegreen},
    commentstyle=\color{codegreen},
    frame=none,              
    showstringspaces=false,
    breaklines=true,
    numbers=left,
    numbersep=3pt,
    tabsize=2,
    breakatwhitespace=false,
    abovecaptionskip=2ex,
    captionpos=b,
}}
\newenvironment{claim}[1]{\par\noindent\underline{Claim:}\space#1}{}
\newenvironment{theorem}[1]{\par\noindent\textbf{Theorem:}\space#1}{}
\begin{document}

\title{\texttt{HEBO}: Pushing The Limits of Sample-Efficient Hyperparameter Optimisation}

\author{\name Alexander I. Cowen-Rivers \thanks{\:\:Equal contribution} \thanks{\:\:Huawei Noah's Ark Lab} \thanks{\:\:Technische Universität Darmstadt} \email alexander.cowen.rivers@huawei.com \thanks{\:\:Corresponding Author}
\AND
\name Wenlong Lyu\footnotemark[1] \footnotemark[2] \email lvwenlong2@huawei.com
\AND
\name Rasul Tutunov\footnotemark[1] \footnotemark[2] \email rasul.tutunov@huawei.com \\
\AND
\name Zhi Wang \footnotemark[2] \email wangzhi55@huawei.com
\AND
\name Antoine Grosnit \footnotemark[2]  \email  antoine.grosnit@huawei.com
\AND
\name Ryan Rhys Griffiths \footnotemark[2] \thanks{\:\:University of Cambridge}   \email ryan.rhys.griffiths1@huawei.com
\AND
\name Alexandre Max Maravel \footnotemark[2]   \email alexandre.maravel@huawei.com
\AND
\name Hao Jianye \footnotemark[2]  \email haojianye@huawei.com
\AND
\name Jun Wang \footnotemark[2] \thanks{\:\:University College London}  \email w.j@huawei.com
\AND
\name Jan Peters \footnotemark[3]   \email peters@ias.tu-darmstadt.de
\AND
\name Haitham Bou-Ammar \footnotemark[2] \footnotemark[6] \thanks{\:\: Honorary position.}  \email  haitham.ammar@huawei.com
}

% For research notes, remove the comment character in the line below.
%\researchnote

\maketitle

\begin{abstract}
In this work we rigorously analyse assumptions inherent to black-box optimisation hyper-parameter tuning tasks. Our results on the Bayesmark benchmark indicate that heteroscedasticity and non-stationarity pose significant challenges for black-box optimisers. Based on these findings, we propose a Heteroscedastic and Evolutionary Bayesian Optimisation solver (\texttt{HEBO}). \texttt{HEBO} performs non-linear input and output warping, admits exact marginal log-likelihood optimisation and is robust to the values of learned parameters. We demonstrate \texttt{HEBO}'s empirical efficacy on the NeurIPS 2020 Black-Box Optimisation challenge, where \texttt{HEBO} placed first. Upon further analysis, we observe that \texttt{HEBO} significantly outperforms existing black-box optimisers on 108 machine learning hyperparameter tuning tasks comprising the Bayesmark benchmark. Our findings indicate that the majority of hyper-parameter tuning tasks exhibit heteroscedasticity and non-stationarity, multi-objective acquisition ensembles with Pareto front solutions improve queried configurations, and robust acquisition maximisers afford empirical advantages relative to their non-robust counterparts. We hope these findings may serve as guiding principles for practitioners of Bayesian optimisation. All code is made available at \href{https://github.com/huawei-noah/HEBO}{https://github.com/huawei-noah/HEBO}.
\end{abstract}

\section{Introduction}
Although achieving significant success across numerous applications ~\cite{bobadilla2013recommender,litjens2017survey,fatima2017survey,kandasamy2018neural,cowen2020samba}, the performance of machine learning models chiefly depends on the correct setting of hyper-parameters. As models grow larger and more complex, efficient and autonomous hyper-parameter tuning algorithms become crucial determinants of performance. A variety of methods from black-box and multi-fidelity optimisation \cite{kandasamy2017multi,sen2018multi} have been adopted~ for hyperparameter tuning with varying degrees of success. Techniques such as Bayesian optimisation (BO), for example, enable sample efficiency (in terms of black-box evaluations) at the expense of high computational demands, while ``unguided'' bandit-based approaches can fail to converge~\cite{falkner2018bohb}. Identifying such failure modes, the authors in~\cite{falkner2018bohb} built on~\cite{li2017hyperband} and proposed a combination of bandits and BO that achieves the best of both worlds; fast convergence and computational scalability. More recently in the context of the 2020 NeurIPS competition on Black-Box Optimisation, many BO variants have been convincingly demonstrated to be superior to random search for the task of hyper-parameter tuning \cite{2021_Turner}. Though impressive, such successes of BO and alternative black-box optimisers, belie a set of restrictive modelling and acquisition function assumptions. We begin by describing these assumptions. \\

% Hoping to shed light on the importance of such design choices, in this paper, we begin by describing typical choices

\noindent \textbf{Modelling Assumptions:} A core determinant of BO performance is the set of data modelling assumptions required to specify an appropriate probabilistic model of the black-box objective (e.g., the choice of validation loss in hyper-parameter tuning tasks). The model should not only provide accurate point estimates, but should also maintain calibrated uncertainty estimates to guide exploration of the objective. Amongst many possible surrogates~\cite{2016_Springenberg,2011_Hutter}, Gaussian processes~\cite{williams1996gaussian} (GPs) are the default choice due to their flexibility and sample efficiency. Growing interest in applications of Bayesian optimisation has catalysed engineering feats that enhance scalability and training efficiency of GP surrogates by exploiting graphical processing units~\cite{knudde2017gpflowopt,balandat2020botorch}.

Similar to any other framework, the correct specification of a GP model is dictated by the data modelling assumptions imposed by the user. For instance, a homoscedastic GP suffers from misspecification when required to model data with heteroscedastic noise whilst stationary GPs fail to track non-stationary targets. The aforementioned shortcomings are not unnatural across a range of real-world problems \cite{2007_Kersting,2021_Griffiths,2021_Griffiths_mrk} and hyper-parameter tuning of machine learning algorithms is no exception, as illustrated in our hypothesis tests of Section~\ref{Sec:Hetero}. Hence, even if one succeeds in improving computational efficiency, frequently-made assumptions such as homoscedasticity and stationarity can easily inhibit the performance of any BO-based hyper-parameter tuning algorithm. Despite the importance of these assumptions in practice, GPs that presume homoscedasticity and stationarity still constitute the most common choice of surrogate. \\

\noindent \textbf{Acquisition Function \& Optimiser Assumptions:} Modelling choices such as those described above are not unique to the GP fitting procedure but rather transcend to other steps in the BO algorithm. Precisely, given a model that adheres to some (or all) assumptions mentioned above, the second step involves maximising an acquisition function to query novel input locations that are then evaluated. Hence, practitioners introduce additional constraints relating to the category of optimisation variables and the choice of acquisition function. When it comes to variable categories, mainstream implementations~\cite{knudde2017gpflowopt,balandat2020botorch} assume continuous domains and employ first and second-order optimisers such as LBFGS~\cite{liu89} and ADAM~\cite{Adam} to propose query locations. Real-valued configurations cover but a subset of possible machine learning hyper-parameters rendering discrete variable categories out of scope, an example being the hidden layer size in deep networks. Moreover, from the point of view of acquisition functions, libraries tend to presuppose that one unique acquisition performs best in a given task, while research has shown that benefits that can arise from a combined solution \cite{2014_Shahriari,2016_Shahriari,lyu2018batch} as we demonstrate in Section~\ref{Sec:Exp}. \\

\noindent \textbf{Contributions:} Having identified important modelling choices in BO, our goal in this paper is to provide empirical insight into the impact of modelling choice on empirical performance. As a case study, we consider best practices for hyper-parameter tuning. We wish for our findings to be applicable across a broad range of tasks and datasets, be attentive to the effect of random initialisation on algorithmic performance, and naturally, be reproducible. As such, we prefer to build on established benchmark packages, especially those that facilitate fast and scalable evaluations with multi-seeding protocols. To that end, we undertake our evaluation in 2140 experiments from 108 real-world problems from the UCI repository~\cite{2019_Dua}, which was also the testbed of choice for the NeurIPS 2020 Black-Box Optimisation challenge~\cite{2021_Turner}. Our findings point towards the following conclusions: 
% \vspace{-15pt}
\begin{enumerate}
\item Hyper-parameter tuning tasks exhibit significant levels of heteroscedasticity and non-stationarity.
\item Input and output warping mitigate the effects of heteroscedasticity and non-stationarity giving rise to better performing tuning algorithms with higher mean and median performance across all 108 black-box functions under examination. 
\item Individual acquisition functions tend to conflict in their solution (i.e., an optimum for one acquisition function can be a sub-optimal point for another and vice versa). Using a multi-objective formulation significantly improves performance;.
% \item Robust formulations of acquisition functions admit better performing tuning algorithms. 
\end{enumerate}
% \vspace{-15pt}

To verify our principal conclusions, we conduct additional ablation studies on our proposed solution method, Heteroscedastic and Evolutionary Bayesian Optimisation (HEBO) which attempts to address the shortcomings identified in our analysis and placed first in the 2020 NeurIPS Black-Box Optimisation Challenge. We obtain a ranked order of importance for significant components of HEBO, finding that output warping, multi-objective acquisitions and input warping lead to the most significant improvements followed by robust acquisition function formulations. 

\section{Standard Design Choices in BO}  
As discussed earlier, the problem of hyper-parameter tuning can be framed as an instance of black-box optimisation:
\begin{equation}
\label{Eq:BB}
 \arg\max_{\bm{x} \in \mathcal{X}} f(\bm{x}),  
\end{equation}
with $\bm{x}$ denoting a configuration choice, $\mathcal{X}$ a (potentially) mixed design space, and $f(\bm{x})$ a validation accuracy we wish to maximise. In this paper, we focus on BO as a solution concept for black-box problems of the form depicted in Equation~\ref{Eq:BB}. BO considers a sequential decision approach to the global optimisation of a black-box function $f: \mathcal{X} \rightarrow \mathbb{R}$ over a bounded input domain $\mathcal{X}$. At each decision round, $i$, the algorithm selects a collection of $q$ inputs $\bm{x}^{(\text{new})}_{1:q} \in \mathcal{X}^q$ and observes values of the \emph{black-box} function $\bm{y}^{(\text{new})}_{1:q} = f(\bm{x}^{(\text{new})}_{1:q})$. The goal is to rapidly approach the maximum $\bm{x}^{\star} = \arg\max_{\bm{x} \in \mathcal{X}} f(\bm{x})$. Since both $f(\cdot)$ and $\bm{x}^{\star}$ are unknown, solvers need to trade off exploitation and exploration during this search process. 

To achieve this goal, BO algorithms operate in two steps. In the first, a Bayesian model is learned, while in the second an acquisition function determining new query locations is maximised. Next, we survey frequently-made assumptions in mainstream BO implementations and contemplate their implications for performance.

\subsection{Modelling Assumptions} 
When black-boxes are real-valued, Gaussian processes~\cite{2006_Williams} are effective surrogates due to their flexibility and ability to maintain calibrated uncertainty estimates. In established implementations of BO, designers place GP priors on latent functions, $f(\cdot)$, which are fully specified through a mean function, $m(\bm{x})$, and a covariance function or kernel $k_{\bm{\theta}}(\bm{x}, \bm{x}^{\prime})$ with $\bm{\theta}\in\mathbb{R}^p$ representing kernel hyper-parameters. The model specification is completed by defining a likelihood. Here, practitioners typically assume that observations $y_{l}$ adhere to a Gaussian noise model such that $y_l = f(\bm{x}_l) + \epsilon_l$ where $\epsilon_l \sim \mathcal{N}(0, \sigma_{\text{noise}}^{2})$. This, in turn, generates a Gaussian likelihood of the form $y_l | \bm{x}_{l} \sim \mathcal{N}(f_l, \sigma_{\text{noise}}^{2})$ where we use $f_l$ to denote $f(\bm{x}_{l})$ with $f(\bm{x}) \sim \mathcal{G}\mathcal{P}(m(\bm{x}), k_{\bm{\theta}}(\bm{x}, \bm{x}^{\prime}))$. Additionally, a further design choice commonly made by practitioners is that the GP kernel is stationary, depending only on the norm between $\bm{x}$ and $\bm{x}^{\prime}$, $||\bm{x} - \bm{x}^{\prime}||$. From this exposition, we conclude two important modelling assumptions stated as \emph{data stationarity} and \emph{homoscedasticity of the noise distribution}. Where \textbf{homoscedasticity} implies a constant noise term $\sigma_{\text{noise}}^{2}$. \textbf{Heteroscedasticity} is usually harder to model as implies $\sigma_{\text{noise}}^{2}$ is a function of the input: i.e., depending on the data, the noise changes around the mean. Of course, it is clear that there are significant differences between homoscedastic functions and heteroscedastic functions, and later we show indeed heteroscedastic functions require a different approach to optimise over than the typical homoscedastic (synthetic) functions usually researched in Bayesian Optimisation. If the true latent process does not adhere to these assumptions, the resultant model will be a poor approximation to the black-box. Realising the potential empirical implications of these modelling choices, we identify the first two questions addressed by this paper: \\

\par{\textbf{Q.I.}} Are hyper-parameter tuning tasks stationary? \\
\par{\textbf{Q.II.}} Are hyper-parameter tuning tasks homoscedastic? \\

\noindent In Section~\ref{Sec:Hetero}, we show that even amongst the simplest hyper-parameter tuning tasks, the null hypothesis may be rejected in the case of statistical hypothesis tests for heteroscedasticity and non-stationarity. 

\subsection{Acquisition Function \& Optimisation Assumptions} \label{Sec:AcqAssumptions}
Acquisition functions trade off exploration and exploitation by utilising statistics from the posterior $p_{\bm{\theta}}(f(\cdot)|\mathcal{D})$ with $\mathcal{D}$ denoting the data (hyper-parameter configurations as inputs and validation accuracy as outputs) collected so far. Under a GP surrogate with Gaussian-corrupted observations $y_\ell = f(\boldsymbol{x}_\ell) + \epsilon_\ell$ where $\epsilon_\ell \sim \mathcal{N}(0, \sigma^2)$, and given a data set $\mathcal{D} = \{\boldsymbol{x}, \boldsymbol{y}\}$, the joint distribution of $\mathcal{D}$ and an arbitrary set of input points $\boldsymbol{x}_{1:q}$ is given by

\begin{align*}
&\left[\begin{array}{c}
      \bm{y}  \\
      f(\bm{x}_{1:q}) 
        \end{array}
        \right] \Bigg| \ \bm{\theta} \sim \nonumber  \mathcal{N}\left(\left[\begin{array}{cc}
        m(\bm{x}) \\
        m(\bm{x}_{1:q})
        \end{array}
        \right], \left[\begin{array}{cc}
      \bm{K}_{\bm{\theta}} + \sigma^{2} \bm{I} &  \bm{k}_{\bm{\theta}}(\bm{x}_{1:q})  \\
      \bm{k}^{\mathsf{T}}_{\bm{\theta}}(\bm{x}_{1:q}) & \bm{k}_{\bm{\theta}}(\bm{x}_{1:q}, \bm{x}_{1:q})  
        \end{array}
        \right]\right),
    \end{align*}
where $\bm{K}_{\bm{\theta}} = \bm{K}_{\bm{\theta}}(\bm{x}, \bm{x})$ and $\bm{k}_{\bm{\theta}}(\bm{x}_{1:q}) = \bm{k}_{\bm{\theta}}(\bm{x}, \bm{x}_{1:q})$. From this joint distribution one can derive though marginalisation \cite{2006_Williams} the posterior predictive $p(f(\bm{x}_{1:q})|\mathcal{D}) = \mathcal{N}(\bm{\mu}_{\bm{\theta}}(\bm{x}_{1:q}), \bm{\Sigma}_{\bm{\theta}}(\bm{x}_{1:q}))$ with:
\begin{align*}
    \bm{\mu}_{\bm{\theta}}(\bm{x}_{1:q}) &= m(\bm{x}_{1:q}) + \bm{k}_{\bm{\theta}}(\bm{x}_{1:q})^\top(\bm{K}_{\bm{\theta}} + \sigma^{2} \bm{I})^{-1}(\bm{y}-m(\bm{x})) \\
     \bm{\Sigma}_{\bm{\theta}}(\bm{x}_{1:q}) & = \bm{K}_{\bm{\theta}}(\bm{x}_{1:q}, \bm{x}_{1:q}) - \bm{k}_{\bm{\theta}}(\bm{x}_{1:q})^\top(\bm{K}_{\bm{\theta}} + \sigma^{2} \bm{I})^{-1}\bm{k}_{\bm{\theta}}(\bm{x}_{1:q}).
\end{align*} 

\noindent As such we note that $p(f(\bm{x}_{1:q})|\mathcal{D}) = \mathcal{N}(\bm{\mu}_{\bm{\theta}}(\bm{x}_{1:q}), \bm{\Sigma}_{\bm{\theta}}(\bm{x}_{1:q}))$. In this paper, we focus on three widely-used myopic acquisition functions which in a reparameterised form can be written as~\cite{wilson2018marginal}: \\

\noindent \textbf{Expected Improvement (EI):}
\begin{align*}
     \label{Eq:q_EI}
    \alpha^{\bm{\theta}}_{\text{EI}}(\bm{x}_{1:q}|\mathcal{D}) &=  \mathbb{E}_{\text{post.}}\Bigg[\max_{j \in 1:q}\{\text{ReLU}(f(\bm{x}_{j})-  f(\bm{x}^{+}))\}\Bigg],
\end{align*}
where the subscript $\text{'post.'}$ is the predictive posterior of a GP~\cite{2006_Williams}, $\bm{x}_{j}$ is the $j^{th}$ vector of $\bm{x}_{1:q}$, and $\bm{x}^{+}$ is the best performing input in the data so far. \\

\noindent \textbf{Probability of Improvement (PI):} 
\begin{align*}
    \alpha^{\bm{\theta}}_{\text{PI}}(\bm{x}_{1:q}|\mathcal{D}) &=  \mathbb{E}_{\text{post.}}\Bigg[\max_{j \in 1:q}\{\one\{{f}(\bm{x}_{j})-  f(\bm{x}^{+})\}\}\Bigg],
\end{align*}
where $\one\{\cdot\}$ is the left-continuous Heaviside step function. \\

\noindent \textbf{Upper Confidence Bound (UCB):}
% \vspace{-4pt}
\begin{align*}
    \alpha^{\bm{\theta}}_{\text{UCB}}(\bm{x}_{j}) &= \mathbb{E}_{\text{post.}}\Bigg[\max_{j\in1:q}\Bigg\{{\mu}_{\bm{\theta}}(\bm{x}_{j}) + \sqrt{\sfrac{\beta \pi}{2}}|{\gamma}_{\bm{\theta}}(\bm{x}_{j})|\Bigg\}\Bigg],
\end{align*}
where ${\mu}_{\bm{\theta}}(\bm{x}_{j})$ is the posterior mean of the predictive distribution and ${\gamma}_{\bm{\theta}}(\bm{x}_{j}) = {f}(\bm{x}_{j})  - {\mu}_{\bm{\theta}}(\bm{x}_{j})$. When it comes to practicality, generic BO implementations make additional assumptions during the acquisition maximisation step. First, it is assumed that one of the aforementioned acquisitions works best for a specific task, and that the GP model is an accurate approximation to the black-box. However, when it comes to real-world applications, both of these assumptions are difficult to validate; the best-performing acquisition is challenging to identify upfront and GP models may easily be misspecified. With this in mind, we identify a third question that we wish to address: \\

\par{\textbf{Q.III.}} Can acquisition function solutions conflict in hyper-parameter tuning tasks? \\

\noindent In the following section, we affirm that acquisitions can conflict even on the simplest of hyper-parameter tuning tasks. Moreover, we show that a robust formulation to tackle misspecification of acquisition maximisation can improve overall performance (see Section~\ref{Sec:Robust}).

\section{Modelling Assumption Analysis}\label{Sec:Answers}
Before discussing the improvements afforded to BO via our solution method, we detail analyses conducted to answer questions ($\textbf{Q.I.}$, $\textbf{Q.II.}$, and $\textbf{Q.III.}$) posed in the previous section. Our analyses indicate: \\ \\
\underline{\textbf{A.I.}:} Even simple hyper-parameter tuning tasks exhibit significant heteroscedasticity.\\
\underline{\textbf{A.II.}:} Even simple  hyper-parameter tasks exhibit significant non-stationarity.\\
\underline{\textbf{A.III.}:} Acquisition functions conflict in their optima, occasionally leading to opposing solutions. \\ \\
\textbf{Experiment Setting:} We create a wide range of hyper-parameter tasks (108) across a variety of classification and regression problems. We use nine models, (e.g. multilayer perceptrons, support vector machines) and six datasets (two regression and four classification) from the UCI repository, and two metrics per dataset (such as negative log-likelihood or mean squared error). Each model possesses tuneable hyper-parameters, e.g. the number of hidden units of a neural network. The goal is to fit these hyper-parameters so as to maximise/minimise one of the specified metrics. Values of the black-box objective are stochastic with noise contributions originating from the train-test splits used to compute the losses. Experimentation was facilitated by the \texttt{Bayesmark}\footnote{\href{https://github.com/uber/bayesmark}{https://github.com/uber/bayesmark}} package. Full hyper-parameter search spaces are defined in \autoref{tab:search-space} and \autoref{tab:search-space-reg}.~\footnote{It is these search spaces that are used by the random search baseline.}. \\

\noindent \textbf{Statistical Hypothesis Testing for Heteroscedasticity and Non-Stationarity:} We describe here the statistical hypothesis tests we use to answer \textbf{Q.II.}. GP regression typically considers a conditional normal distribution of the observations $y | \cdot \sim \mathcal{N}(f(\cdot), \sigma^2(\cdot))$ and in most cases $\sigma(\cdot)^2$ is assumed to be constant, in which case the GP is termed homoscedastic. To assess whether the homoscedasticity assumption holds for the tasks under examination, we make use of Levene's test and the Fligner-Killeen test. To give the reader intuition as to how we apply these tests, Levene’s test asseses whether the variance is equal in two groups of data, assuming the data is normally distributed. I.e for a given task, given multiple evaluations of the black-box of two distinct hyperparameter sets, do the share the same variance (Homoscedasticity), or do their variances differ (Heteroscedasticity). Secondly, the Fligner-Killeen test is similarly a test for Homoscedasticity, however it is particularly useful when the data is non-normal. We refer the reader to the Appendix~\ref{test:additionalinfo} for additional information regarding the tests. 

To run these tests on a given task, we evaluate $k=50$ distinct sets of hyperparameters $\{x_i\}_{1\leq i \leq k}$ for $n=10$ times and obtain scores $\{Y_{i j}\}_{1\leq i \leq k, 1 \leq j \leq n}$, where $Y_{i j}$ is the $j^\text{th}$ score observed when evaluating the $i^\text{th}$ configuration. For $i = 1, \dots, k$, let $\sigma_i^2$ denote the observed variance of $y | x_i$, then both Levene's test and the Fligner-Killeen test share the same null hypothesis of homoscedasticity:

\begin{equation*}
    H_0: \sigma_1^2 = \dots = \sigma_k^2.
\end{equation*}

In all 108 tests, we see a p-value significantly lower than $0.05$ in $72$ tasks using Levene's test, and in $73$ tasks using Fligner-Killeen test. Such results (shown in detail in Appendix~\ref{sec:hyp_test_app}) imply that at least $66\%$ of the experimental tasks exhibit heteroscedastic behaviour. \\

\begin{table}[ht!]
\caption{Hypothesis Testing for 108 tasks with respect to GP fit. In the table below we show, out of all 108 tasks, whether the GP fit (marginal log-likelihoods) was improved (Better) when either the Output transform or Input warping was added into the surrogate model, or was worse. Furthermore, we include significantly testing using the one sided t-test and detail how many tasks the GP fit was significantly better or worse with these additional modelling components. We find that output transformations which tackle heteroscedasticity significantly improve GP modelling capabilities in general (improve marginal log-likelihoods). Similarly, input transformations which tackle non-stationarity significantly improve GP modelling capabilities in general.}
\centering
\begin{tabular}{lllll}
\hline
 & Better & Sig. Better & Worse & Sig. Worse \\ \hline
Heteroscedasticity (Output Transform) & 70 (65\%)  & 58 (54\%) & 38 (35\%) & 25 (23\%)\\
Non-Stationarity (Input Warping) & 106 (98\%) & 79 (73\%) & 2 (2\%) & 0 (0\%)
\end{tabular}
\label{stat}
\end{table}

\subsection{Answer A.I.: Simple Hyper-parameter Tuning Tasks are Non-Stationary} \label{Sec:nonStat}
% Each of these tests is run on every task combination (i.e., model, data, metric) with 50 unique hyper-parameters. We repeat each ten folds while fixing the hyper-parameter configuration.
To assess the impact of the extent of non-stationarity on BO performance, we conduct probabilistic regression experiments to gauge the predictive performance of a stationary GP on the hyper-parameter tuning tasks with and without input warping transformations which correct for non-stationarity. We first run a one-sided t-test for each of the 108 tasks where the null hypothesis is that the application of the input warping yields no difference in the log probability metric. In \autoref{stat} significance tests show that in 106/108 tasks, the log probability metric is more favourable when input warping is applied. In 79/108 tasks, the gain is significant at the 95\% level of confidence (p-value $< 0.025$). It is clear that tackling Non-stationarity improves GP fit as shown in Table~\ref{stat} and improves BO performance, as shown by the algorithm BO Base w Non-stationarity in Figure~\ref{ablation}. We thus conclude that non-stationarity is an important consideration for BO performance due to the observed effect on the log probability metric.

% Over 108 tasks, we measure the log probability across 10 (previously 3) random train/test splits, both of 128 points. We set the dataset size to 128 as this is the maximum train set size encountered in the BO trace of a given task (16 iterations of batch 8). 
%
% We perform an analogous hypothesis test for non-stationarity in \autoref{stat}, assessing a stationary GP's performance with and without input warping. 
\subsection{Answers A.II.: Simple Hyper-parameter Tuning Tasks are Heteroscedastic}\label{Sec:Hetero}
We perform an analogous hypothesis test as in Section~\ref{Sec:nonStat}, assessing a vanilla GP's performance with and without output transformations (Box-Cox/ Yeo-Johnson). We run a two-sided paired t-test for each of the 108 tasks where the null hypothesis is that the application of the output transform yields no difference in the log probability metric. In \autoref{stat} significance tests show that in 70/108 tasks, the log probability metric is more favourable when output transformations are applied. In 58/108 tasks, the gain is significant at the 95\% level of confidence (p-value $< 0.025$). It is clear that tackling Heteroscedasticity improves GP fit as shown in Table~\ref{stat} and improves BO performance, as shown by the algorithm BO Base w Heteroscedasticity in Figure~\ref{ablation}. We thus conclude that heteroscedasticity is an important consideration for BO performance due to it isimpact on the log probability metric.

Furthermore, to gauge the level of heteroscedasticity in the underlying data, we use the Fligner-Killeen~\cite{fligner1976distribution} and Levene~\cite{levene1960contributions} tests. For both tests, the null hypothesis is that the underlying black-box function noise process is homoscedastic. In all 108 tests, we see a p-value significantly lower than $0.05$ in $72$ tasks using Levene's test, and in $73$ tasks using Fligner-Killeen. Such results (shown in detail in Appendix~\ref{sec:hyp_test_app}) imply that at least $66\%$ of the experimental tasks exhibit heteroscedastic behaviour.

\subsection{Answer A.III.: No Clear Winner}\label{Sec:AnswerQ3}
\begin{figure*}
    \centering
    % \vspace{-1.4em}
    \includegraphics[width=1.0\textwidth]{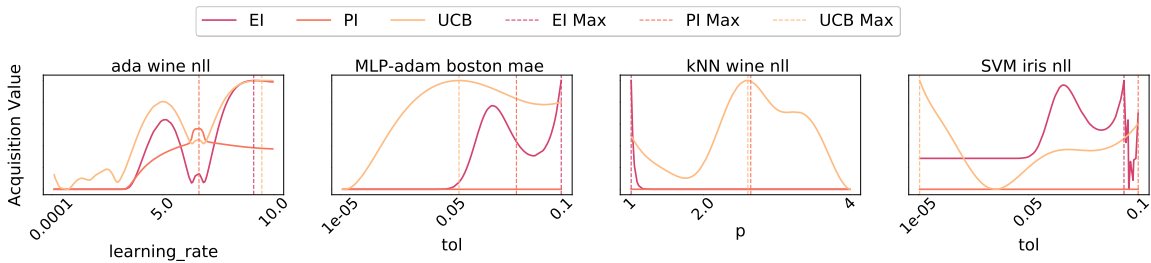}
    \caption{Examples depicting conflicting acquisitions across data sets (Wine, Boston Housing, and Iris) and models (AdaBoost, Multilayer perceptron, K-Nearest neighbours, and support vector machines). The y-axis shows the acquisition value, and x-axis a given configuration of hyperparameters. Clearly, in these examples, not only different acquisitions lead to different optima, but it can be seen that such solutions might conflict (minimum value for one acquisition function is a maximum value for another acquisition function).}
    \label{Fig:AcqF}
% \vspace{-1.4em}
\end{figure*}
It has previously been observed that acquisition functions can conflict in their optima~\cite{2014_Shahriari}. To provide further support for the answer to \textbf{Q.III.}, we collect 128 samples from each task by evaluating various hyper-parameter configurations across metrics. We then assemble a data set $\mathcal{D} = \{\textbf{hyper-param}_i, y_i\}_{i=1}^{32}$, where $\textbf{hyper-param}_i$ is a vector with dimensionality dependent on the number of hyper-parameters in a given model, and $y_i$ is an evaluation metric, (e.g., mean squared error) We subsequently fit a GP surrogate model and consider each of the three acquisition functions from Section~\ref{Sec:AcqAssumptions}. Given the difficulty involved in the graphical depiction of an acquisition function conflict in more than two dimensions, we examine a simple, two-dimensional illustrative example.
% We visualise all three acquisition functions per hyperparameter dimension and report partial results in Figure~\ref{Fig:AcqF} with a full set presented in Appendix~\ref{Sec:App:Conflict}. 
From Figure~\ref{Fig:AcqF}, it is apparent that even in the simplest 2D case, many examples of conflicting acquisitions exist. Thus, in higher dimensions this behaviour will also occur. 

\section{Optimising Bayesian Optimisation}\label{Sec:Improve}
In this section we describe the component design choices that may mitigate for heteroscedastic and non-stationary aspects of commonly-encountered BO problems. Input and output transformations as well as multi-objective acquisition functions have been introduced in isolation previously, whilst acquisition function robustness is unique to this work. The overall design choices produce the method which we name Heteroscedastic and Evolutionary Bayesian Optimisation (HEBO).

\subsection{Tackling Heteroscedasticity and Non-Stationarity}\label{Sec:HeteroSol}

To parsimoniously handle heteroscedasticity and non-stationarity, we leverage ideas from the warped GP literature ~\cite{snelson2004warped} where output transformations facilitate the modelling of complex noise processes. We observe that the well-known \texttt{Box-Cox}~\cite{box1964analysis} and \texttt{Yeo-Jonhson}~\cite{yeo2000new} output transformations in conjunction with the \texttt{Kumaraswamy}~\cite{kumaraswamy1980generalized} input transformation, offer a balance between simplicity of implementation and empirical performance. In our ablation study (Section~\ref{Sec:Exp}), we demonstrate that the addition of these two modelling components alone yields large performance gains. Note, \underline{we refit the parameters for the output transformation before we refit the GP} after receiving a new samples. \\

\noindent \textbf{Output Transformation for Heteroscedasticity:} We consider the \texttt{Box-Cox} transformation most frequently used as a corrective mapping for non-Gaussian data. The transform depends on a tuneable parameter $\zeta$ and applies the following map to each of the labels: $\text{T}_{\zeta}(y_l) = \sfrac {{y}_{l}^{\zeta }-1}{\zeta }$ for $\zeta \neq 0$ and $\text{T}_{\zeta}(y_l) = \log y_l$ if $\zeta = 0$, where in our case $y_l$ denotes the validation accuracy of the $l^{th}$ hyper-parameter configuration. $\zeta$ must be fit based on the observed data such that the distribution of the transformed labels closely resembles a Gaussian distribution. This is achieved by minimising the negative \texttt{Box-Cox} likelihood function:

\begin{align*}
\log\left[\sum_{l=1}^n \frac{(\text{T}_{\zeta}(y_l)  - \overline{\text{T}}_{\zeta}(\bm{y}))^2}{n}\right]^{\frac{n}{2}} + \sum_{l=1}^n \log\left[\text{T}_{\zeta}(y_l)\right]^{(1- \zeta)},
\end{align*}
where $n$ is the number of datapoints and $\overline{\text{T}_{\zeta}(\bm{y})}$ is the sample mean of the transformed labels. \texttt{Box-Cox} transforms only consider strictly positive (or strictly negative) labels $y_l$. \\

When labels take on arbitrary values, we use the \texttt{Yeo-Johnson} transform in place of the \texttt{Box-Cox} transform. The \texttt{Yeo-Johnson} transform is defined as follows:
\begin{align*}
        \texttt{Y.J.}_{\zeta} (y_l) = \left\{\begin{array}{lr}
       \frac{(y_l + 1)^{\zeta} - 1}{\zeta}, &  \text{if $\zeta \neq 0$, $y_l \geq 0$}\\
        \log (y_l + 1), &   \text{if $\zeta  = 0$, $y_l \geq 0$}\\
        \frac{(1 - y_l)^{2 - \zeta} - 1}{\zeta - 2} &   \text{if $\zeta  \neq 2$, $y_l < 0$}\\
        - \log (1 - y_l)    & \text{if $\zeta  = 2$, $y_l < 0$.}
        \end{array}\right.
\end{align*}
In an analogous fashion to the \texttt{Box-Cox} transform, the \texttt{Yeo-Johnson}'s parameter is fit based on the observed data through solving the following 1-dimensional optimisation problem: 

\begin{align*}
\max_{\zeta} &-\frac{n}{2} \log \left[\frac{\sum_{j=1}^n (\texttt{Y.J.}_{\zeta}({y}_l) - \overline{\texttt{Y.J.}_{\zeta}(\bm{y})})^2}{n - 1}\right] + (\zeta - 1) \sum_{i=1}^n \left[\text{sign}({y}_l) \log(|{y}_l|+1)\right],
\end{align*}

\noindent with $\overline{\texttt{Y.J.}_{\zeta}(\bm{y})}$ the sample average computed after applying the \texttt{Yeo-Johnson} transformation. \\

% To handle a more general setting, we also make use of the \texttt{Yeo-Johnson} transform~\cite{yeo2000new}. Due to space constraints we defer the details to Appendix~\ref{Sec:YeoJohn}. 

% An important property we also considered when proposing such transformations is ease of implementation. For both transforms mentioned above, efficient implementations are readily available in various libraries such as \texttt{Scikit-learn}~\cite{scikit-learn} within the \texttt{PowerTransform} package where the Brent optimiser~\cite{brent2013algorithms} is used for determining $\zeta^{\star}$.   

\noindent \textbf{Input Transformations for Non-Stationarity:} As a general solution concept for correcting for non-stationarity, we consider input warping see~\cite{snoek2012practical}. Input warping performs a (usually non-linear and learnable) transformation to the input variables $(\bm{x}_{l})$. It was proven in~\cite{snoek2012practical} that Input warping also helps tackle non-stationary functions. We rely on the \texttt{Kumaraswamy} input warping transform as used in~\cite{snoek2012practical}, which operates as follows for each input dimension:

\begin{equation*}
    [\texttt{Kumaraswamy}_{\bm{\gamma}} (\bm{x}_{l})]_{k} = 1 - \left(1 - [\bm{x}_{l}]_{k}^{a_k}\right)^{b_{k}} \ \forall k \in [1:d],
\end{equation*}

where $d$ is the dimensionality of the decision variable (i.e. the number of free hyper-parameters), $a_{k}$ and $b_{k}$ are tuneable warping parameters for each of the dimensions, and $\bm{\gamma}$ is a vector concatenating all free parameters, i.e., $\bm{\gamma}= [a_{1:d}, b_{1:d}]^{\mathsf{T}}$. $\bm{\gamma}$ is fit based on the observed data. Similar to~\cite{balandat2020botorch}, we optimise $\bm{\gamma}$ under the marginal likelihood objective used to fit the GP surrogate. \\

\noindent {\textbf{All Modelling Improvements Together:}} Combining the above corrective measures for heteroscedasticity and non-stationarity leads us to an improved GP surrogate with more flexible modelling capabilities. The implementation of such a model is relatively simple and involves maximising a new marginal likelihood which may be written as:
\begin{align*}
    \max_{\bm{\theta}, \bm{\gamma}} &- \frac{1}{2} \text{T}_{\zeta^{\star}}(\bm{y})^{\mathsf{T}}(\bm{K}^{\bm{\gamma}}_{\bm{\theta}} + \sigma_{\text{noise}}\bm{I})^{-1}\text{T}_{\zeta^{\star}}(\bm{y}) - \frac{1}{2}|\bm{K}^{\bm{\gamma}}_{\bm{\theta}} + \sigma_{\text{noise}}^{2} \bm{I}| - \text{const,}
\end{align*}

\noindent where $\bm{\theta}$ are GP hyper-parameters, $\bm{\gamma}$ indicates the use of non-stationary transformations, and $\zeta^{\star}$ denotes the solution to a $\texttt{Box-Cox}$ likelihood objective. It is worth noting that we use $\texttt{Box-Cox}$ as a running example but as mentioned previously we interchange $\texttt{Box-Cox}$ with $\texttt{Yeo-Johnson}$ transforms based on the properties of the label $y_l$. We use $\bm{K}^{\bm{\gamma}}_{\bm{\theta}} \in \mathbb{R}^{n \times n}$ to represent a matrix such that each entry depends on both $\bm{\theta}$ and $\bm{\gamma}$, where $k_{\bm{\theta}}^{\gamma} (\bm{x}, \bm{x}^{\prime}) = k_{\bm{\theta}}(\texttt{Kumaraswamy}_{\bm{\gamma}} (\bm{x}), \texttt{Kumaraswamy}_{\bm{\gamma}} (\bm{x}^{\prime}))$.  

\subsection{Tackling Acquisition Conflict \& Robustness}
Having proposed modifications to the surrogate model component of the Bayesian optimisation scheme, we now turn our attention to the acquisition maximisation step. In particular, we focus on two considerations, the first related to the assumption of a perfect GP surrogate, and the second centred on conflicting acquisitions. 

\subsubsection{A Robust Acquisition Objective}\label{Sec:Robust}
As mentioned in Section~\ref{Sec:AcqAssumptions}, the acquisition maximisation step assumes that an adequate surrogate model is readily available. During early rounds of training especially, where data is scarce, such a property is often violated, leading to potentially severe model misspecification. One way to tackle such model misspecification is to adopt a robust formulation~\cite{kirschner2020distributionally,klein2017robo} which attempts to identify the best-performing query location under the worst-case GP model, i.e., solving $\max_{\bm{x}} \min_{\bm{\theta}} \alpha^{\bm{\theta}} (\bm{x}|\mathcal{D})$. Though such a formulation admits a solution $\bm{x}^{\star}$ that is robust to worst-case misspecification in $\bm{\theta}$, having a $\max \min$ acquisition is problematic for several reasons. From a conceptual perspective $\max \min$ formulations are known to lead to very conservative solutions if not correctly constrained or regularised since the optimiser possesses the power to impair the GP fit while updating  $\bm{\theta}$\footnote{One can make a case for augmenting the objective with a constraint such that updates for $\bm{\theta}$ remain close to $\bm{\theta}^{\star}$ of the marginal likelihood. The ideal enforced proximity value however remains unclear in the robust acquisition literature to date~\cite{WRL,kirschner2020distributionally}.}. From the perspective of implementation, one encounters two further issues. First, no global convergence guarantees are known for the non-convex, non-concave case~\cite{MJ}, and second, ensuring gradients can propagate through the computation graph restricts surrogates and acquisition functions to be within the same programming framework. 

To avoid worst-case solutions and engender independence between acquisition functions and surrogate models, given a set of parameters from a trained GP $\bm{\theta}$, we leverage ideas from domain randomisation~\cite{DR} and consider an expected formulation instead over these parameters: $
   \max_{\bm{x}} \alpha^{\bm{\theta}}_{\text{rob.}}(\bm{x}|\mathcal{D}) \equiv \max_{\bm{x}} \mathbb{E}_{\epsilon \sim \mathcal{N}(\bm{0}, \sigma_{\epsilon}^{2}\bm{I})}\left[\alpha^{\bm{\theta}+\epsilon}(\bm{x}|\mathcal{D})\right]$. 
Importantly, this problem seeks to find new query locations that perform well on average over a distribution of surrogate models in favour of assuming a perfect surrogate. Despite on an intractable nature of  $\alpha^{\bm{\theta}}_{\text{rob.}}(\cdot|\mathcal{D})$, in \text{HEBO} we show (the rigorous representation of this result is presented in Appendix~\ref{app:robacq}) that it can be approximated with any arbitrary precision and high confidence with $\overline{\alpha}^{\bm{\theta}} (\bm{x}|\mathcal{D}) = \alpha^{\bm{\theta}} (\bm{x}|\mathcal{D}) + \mathcal{N}(0,\sigma^2_{n})$ by properly choosing  parameters $\sigma_{\epsilon}$ and $\sigma_n$:
\begin{theorem}(\text{ Informal })
Let us consider the  stochastic version of the acquisition function utilised in HEBO and given by $\overline{\alpha}^{\bm{\theta}} (\bm{x}|\mathcal{D}) = \alpha^{\bm{\theta}} (\bm{x}|\mathcal{D}) + \mathcal{N}(0,\sigma^2_{n})$ and Let $\alpha^{\bm{\theta}}_{\text{rob.}}(\bm{x}|\mathcal{D}) \equiv \mathbb{E}_{\epsilon \sim \mathcal{N}(\bm{0}, \sigma_{\epsilon}^{2}\bm{I})}\left[\alpha^{\bm{\theta}+\epsilon}(\bm{x}|\mathcal{D})\right]$ be the robust form of the standard acquisition function given as expectation over random perturbation of parameter $\bm{\theta}$. Then,  with proper choice of parameters $\sigma_n$ and  $\sigma_{\epsilon}$, \text{HEBO} acquisition function $\overline{\alpha}^{\bm{\theta}} (\bm{x}|\mathcal{D}) $ accurately approximates the robust acquisition function $\alpha^{\bm{\theta}}_{\text{rob.}}(\bm{x}|\mathcal{D})$ with high probability \footnote{Here we use the common approach for proving stochastic expressions with high probability (see \cite{jordan_cubic}, \cite{AZ01}). Specifically, we show that for any confidence parameter   $\delta\in(0,1)$ the stochastic expression under consideration is valid with probability at least $1 - \delta$.} and for any $\bm{\theta},\bm{x}$. 
\end{theorem}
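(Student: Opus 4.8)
The plan is to control the gap between the HEBO acquisition $\overline{\alpha}^{\bm{\theta}}(\bm{x}|\mathcal{D})$ and the robust acquisition $\alpha^{\bm{\theta}}_{\text{rob.}}(\bm{x}|\mathcal{D})$ by splitting it, via the triangle inequality, into a \emph{deterministic bias} term and a \emph{stochastic fluctuation} term, then driving each below a prescribed tolerance by the choice of $\sigma_\epsilon$ and $\sigma_n$ respectively. Writing
\begin{align*}
\overline{\alpha}^{\bm{\theta}}(\bm{x}|\mathcal{D}) - \alpha^{\bm{\theta}}_{\text{rob.}}(\bm{x}|\mathcal{D}) = \underbrace{\left[\alpha^{\bm{\theta}}(\bm{x}|\mathcal{D}) - \alpha^{\bm{\theta}}_{\text{rob.}}(\bm{x}|\mathcal{D})\right]}_{\text{deterministic bias}} + \underbrace{\xi}_{\xi \sim \mathcal{N}(0,\sigma_n^2)},
\end{align*}
I note that the zero-mean noise makes $\overline{\alpha}^{\bm{\theta}}$ concentrate around the unperturbed $\alpha^{\bm{\theta}}$, while the robust form collapses to the same $\alpha^{\bm{\theta}}$ as $\sigma_\epsilon\to 0$; the argument is therefore to show both objects are close to $\alpha^{\bm{\theta}}$, and hence to each other.

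For the deterministic bias I would exploit the regularity of the acquisition as a function of the kernel hyper-parameters. Since $\alpha^{\bm{\theta}}_{\text{rob.}} = \mathbb{E}_{\epsilon}[\alpha^{\bm{\theta}+\epsilon}]$ with $\epsilon\sim\mathcal{N}(\bm{0},\sigma_\epsilon^2\bm{I})$, a Taylor expansion of $\alpha^{\bm{\theta}+\epsilon}$ around $\bm{\theta}$, followed by taking expectations using $\mathbb{E}[\epsilon]=\bm{0}$ and $\mathbb{E}[\epsilon\epsilon^{\mathsf{T}}]=\sigma_\epsilon^2\bm{I}$, yields
\begin{align*}
\left|\alpha^{\bm{\theta}}(\bm{x}|\mathcal{D}) - \alpha^{\bm{\theta}}_{\text{rob.}}(\bm{x}|\mathcal{D})\right| \le \tfrac{\sigma_\epsilon^2}{2}\left|\operatorname{tr}\!\left(\nabla^2_{\bm{\theta}}\alpha^{\bm{\theta}}(\bm{x}|\mathcal{D})\right)\right| + O(\sigma_\epsilon^4),
\end{align*}
so the bias is of order $\sigma_\epsilon^2$; a cruder estimate using only Lipschitz continuity of $\bm{\theta}\mapsto\alpha^{\bm{\theta}}$ gives order $\sigma_\epsilon\sqrt{p}$, which also suffices. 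Either way, for any tolerance $\varepsilon>0$ one can select $\sigma_\epsilon$ small enough that this term falls below $\varepsilon/2$ uniformly in $\bm{x}$ and $\bm{\theta}$, provided the relevant derivatives are bounded. This same expansion also exposes the natural scaling linking the two parameters, $\sigma_n \sim \sigma_\epsilon\lVert\nabla_{\bm{\theta}}\alpha^{\bm{\theta}}\rVert$, since a single perturbed evaluation behaves like $\alpha^{\bm{\theta}}$ plus approximately Gaussian noise.

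For the stochastic term, $\xi\sim\mathcal{N}(0,\sigma_n^2)$ is sub-Gaussian, so the standard Gaussian tail bound gives, for any confidence $\delta\in(0,1)$,
\begin{align*}
\Pr\!\left[|\xi| > \sigma_n\sqrt{2\ln(2/\delta)}\right] \le \delta .
\end{align*}
Thus with probability at least $1-\delta$ the fluctuation is at most $\sigma_n\sqrt{2\ln(2/\delta)}$, which is forced below $\varepsilon/2$ by taking $\sigma_n \le \varepsilon/(2\sqrt{2\ln(2/\delta)})$. Combining the two estimates through the triangle inequality then yields $|\overline{\alpha}^{\bm{\theta}}(\bm{x}|\mathcal{D}) - \alpha^{\bm{\theta}}_{\text{rob.}}(\bm{x}|\mathcal{D})| \le \varepsilon$ with probability at least $1-\delta$, simultaneously for all $\bm{x},\bm{\theta}$, which is precisely the claimed high-probability approximation.

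The main obstacle I anticipate is establishing the smoothness estimates that underpin the bias bound: namely that $\bm{\theta}\mapsto\alpha^{\bm{\theta}}(\bm{x}|\mathcal{D})$ is (twice) differentiable with derivatives bounded uniformly over the compact design space $\mathcal{X}$ and over the admissible range of $\bm{\theta}$. This reduces to showing that the posterior mean $\bm{\mu}_{\bm{\theta}}$ and covariance $\bm{\Sigma}_{\bm{\theta}}$ depend smoothly and boundedly on $\bm{\theta}$, which in turn requires the regularised Gram matrix $\bm{K}_{\bm{\theta}}+\sigma^2\bm{I}$ to have a uniformly bounded inverse — guaranteed by the strictly positive smallest eigenvalue that the $\sigma^2\bm{I}$ jitter enforces — together with smoothness of the kernel in its hyper-parameters. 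A secondary subtlety is the non-smoothness of the $\operatorname{ReLU}$ and Heaviside maps appearing inside EI and PI; this is resolved by observing that the posterior \emph{expectation} defining each acquisition smooths these maps into differentiable functions of $\bm{\mu}_{\bm{\theta}}$ and $\bm{\Sigma}_{\bm{\theta}}$, so the regularity in $\bm{\theta}$ is inherited and the Taylor argument applies.
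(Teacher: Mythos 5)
Your proposal is correct (at the same level of rigor as the paper) but takes a genuinely different route. You decompose the error directly into a deterministic bias $\left|\alpha^{\bm{\theta}}(\bm{x}|\mathcal{D}) - \alpha^{\bm{\theta}}_{\text{rob.}}(\bm{x}|\mathcal{D})\right|$, controlled by a Taylor/Lipschitz argument in $\sigma_{\epsilon}$, plus a zero-mean Gaussian fluctuation controlled by a sub-Gaussian tail bound in $\sigma_n$. The paper instead inserts a Monte-Carlo estimator $\hat{\alpha}^{\bm{\theta}}(\bm{x}|\mathcal{D}) = \frac{1}{N_{\epsilon}}\sum_{j=1}^{N_{\epsilon}}\alpha^{\bm{\theta}+\bm{\epsilon}_j}(\bm{x}|\mathcal{D})$ as an intermediary: the gap $|\hat{\alpha} - \alpha_{\text{rob.}}|$ is handled by Chebyshev's inequality (which requires second-moment bounds $A_2, A_3$ on the perturbed posterior mean and variance, and a sample count $N_{\epsilon}$), while the gap $|\overline{\alpha} - \hat{\alpha}|$ is handled sample-by-sample via a first-order expansion, Cauchy--Schwarz, a concentration bound on $\|\bm{\epsilon}_j\|_2$, a union bound over the $N_{\epsilon}$ draws, and an interval-containment argument; the resulting choices of $\sigma_n$ and $\sigma_{\bm{\epsilon}}$ are coupled through $N_{\epsilon}$ and $\Phi^{-1}\left(1 - \frac{\delta}{8N_{\epsilon}}\right)$. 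Both proofs rest on the same workhorse lemma, a uniform bound on $\|\nabla_{\bm{\theta}}\alpha^{\bm{\theta}}(\bm{x}|\mathcal{D})\|_2$, which the paper establishes in its separate Claim ($\|\bm{h}_{\bm{\theta}}\|_2 \le A_1$) using kernel boundedness and the fact that the jitter keeps $\left[\bm{K}_{\bm{\theta}} + \sigma_n\bm{I}\right]^{-1}$ bounded --- precisely the mechanism you flag as the main obstacle, so your sketch of that step matches the paper's. What your route buys: it is shorter, eliminates the Monte-Carlo device, the Chebyshev step and the union bound, decouples the roles of $\sigma_{\epsilon}$ (bias) and $\sigma_n$ (noise), and in its Lipschitz form needs only the gradient bound $A_1$ rather than the extra moment bounds $A_2, A_3$. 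One caution: your leading second-order expansion with the trace of $\nabla^2_{\bm{\theta}}\alpha^{\bm{\theta}}$ requires uniform Hessian bounds that neither you nor the paper establishes, so you should lean on your stated $O(\sigma_{\epsilon}\sqrt{p})$ Lipschitz fallback, which suffices. Finally, note a caveat you share with the paper rather than a gap of your own: the tail bound is pointwise in $(\bm{x},\bm{\theta})$, so the concluding ``for all $\bm{\theta},\bm{x}$'' holds only in the paper's informal sense, since fresh noise is drawn at each acquisition evaluation; similarly, your smoothing remark for EI/PI plays the role of the paper's ``without loss of generality we choose UCB'' and is a sketch, not a proof.
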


\subsubsection{Multi-Objective Acquisition functions}

As a final component of our general framework, we propose the use of multi-objective acquisitions seeking a Pareto-front solution. This formulation facilitates the process of ``hedging'' between different acquisitions such that no single acquisition dominates the solution~\cite{lyu2018batch}. Formally, we solve
\begin{equation}
\label{Eq:MOO}
    \max_{\bm{x}} \left(\overline{\alpha}^{\bm{\theta}}_{\text{EI}}(\bm{x}|\mathcal{D}), \overline{\alpha}^{\bm{\theta}}_{\text{PI}}(\bm{x}|\mathcal{D}), \overline{\alpha}^{\bm{\theta}}_{\text{UCB}}(\bm{x}|\mathcal{D}) \right),
\end{equation}
where $\overline{\alpha}^{\bm{\theta}}_{\text{type}}(\bm{x}|\mathcal{D})$ is a robust acquisition of $\text{type} \in \{ \text{EI}, \text{PI}, \text{UCB}\}$ as introduced in the previous section. We also note that our formulation is designed to admit the use of a robust objective value of $\overline{\alpha}^{\bm{\theta}} (\bm{x}|\mathcal{D}) = \alpha^{\bm{\theta}} (\bm{x}|\mathcal{D}) + \eta_{n}$ with $\eta_{n}$ being a sample from $\mathcal{N}(0,\sigma_n)$ at each iteration of the evolutionary solver. 
    
Although solving the problem in Equation~\ref{Eq:MOO} is a formidable challenge, we note the existence of many mature multi-objective optimisation algorithms. These range from first-order~\cite{2014_Kingma} to zero-order~\cite{loshchilov2016cma,2020_Gabillon} and evolutionary methods~\cite{2016_Hansen,deb2002fast}. Due to the discrete nature of hyper-parameters in machine learning tasks, we advocate the use of evolutionary solvers that naturally handle categorical and integer-valued variables. In our experiments, we employ the non-dominated sorting genetic algorithm II (\texttt{NSGA-II}) which allows for mixed variable crossover and mutation to optimise real-valued and integer-valued inputs~\cite{deb2002fast}. We use the implementation of \texttt{NSGA-II} found in the \texttt{Pymoo}~\cite{pymoo} library. Alternatively, one may use the GP Hedge acquisition as used in \texttt{Dragonfly}~\cite{JMLR:v21:18-223} in \cite{hoffman2011portfolio} or in \texttt{SkOpt} to select between acquisitions. We however, observed this formulation to perform poorly when compared against individual acquisitions. 

\section{Experiments and Results 
}\label{Sec:Exp}

   \begin{figure}[t]
    \centering
    % \vspace{-2.0em}
    \includegraphics[width=1.0\textwidth]{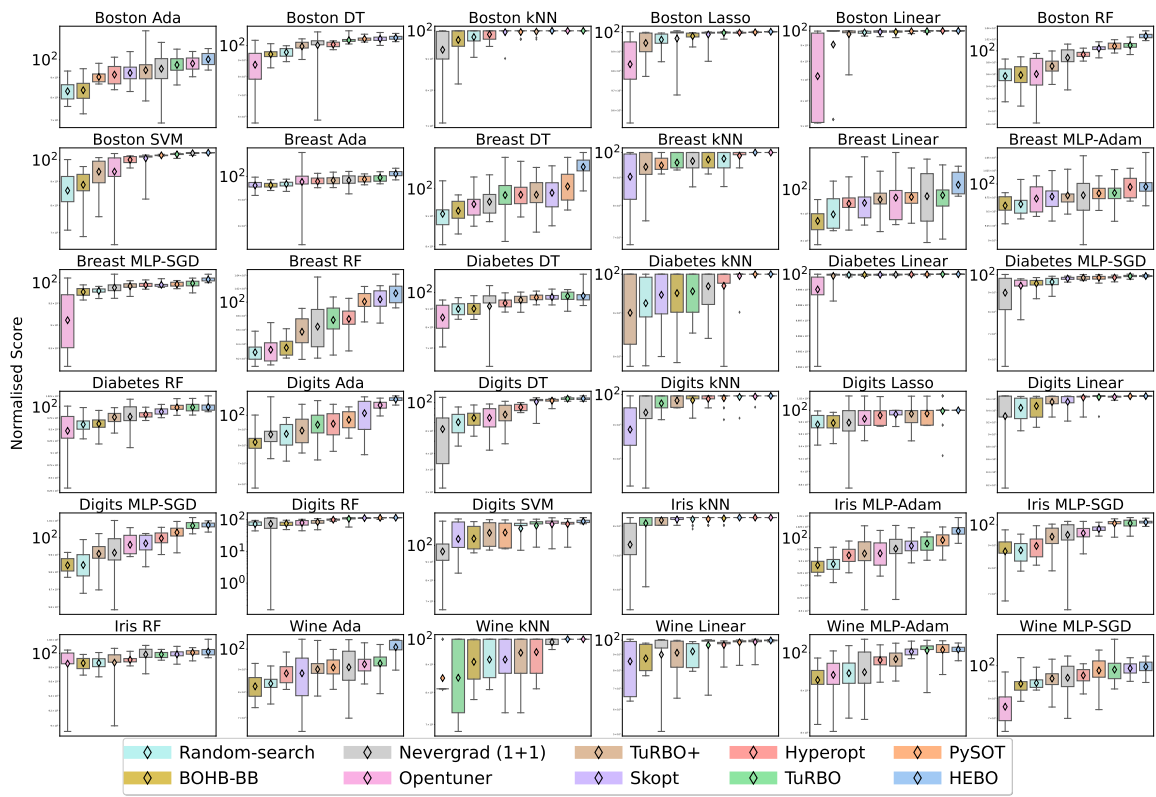}
    % \vspace{-8pt}
    \caption{\texttt{HEBO} compared against all baselines for 16 iterations and a batch size of 8 query points per iteration. Each experiment is repeated with 20 random seeds. We average each seed over both metrics in all tasks and display a subset of 36 summary plots for the 108 black-box functions. \texttt{HEBO} achieves the highest normalised mean score in $68.5\%$ of the 108 black-box functions. Full results for the 108 tasks are presented in Appendix~\ref{fig:summary_all_models} in tabular format.}
    %This distribution of quartiles suggests that worst case we know it should roughly perform near the mean, and best case we could get a result significantly higher than the mean. This, paired with the fact that our mean performance is above satisfactory, give us much confidence in applying this method to a wide-range of highly non-linear real-world black box optimisation tasks.}
    % \vspace{-1.4em}
    \label{fig:best_models_datasets}
    \end{figure}

In this section, we continue our empirical evaluation and validate gains (if any) that arise from the improvements proposed in Section~\ref{Sec:Improve}. The experimental setup remains as described in Section~\ref{Sec:Answers}. To assess performance, we use the normalised task score\footnote{Note, we don't report the time to compute query points per algorithm as this was under 20 seconds per query point batch.}. We run experiments on either 16 iterations with a batch of 8 query points per iteration or 100 iterations with 1 query point. Each experiment is repeated for 20 random seeds. We baseline against a wide range of solvers that either rely on BO-strategies or follow zero-order techniques such as differential evolution or particle swarms. These include \texttt{SkOpt}~\cite{scikit-learn}~\footnote{\href{https://github.com/scikit-optimize/scikit-optimize}{https://github.com/scikit-optimize/scikit-optimize}} \texttt{pySOT}~\footnote{\href{https://github.com/dme65/pySOT}{https://github.com/dme65/pySOT}} a parallel global optimisation package~\cite{eriksson2019pysot}, \texttt{HyperOpt}~\cite{bergstra2013hyperopt}~\footnote{\href{https://github.com/hyperopt/hyperopt}{https://github.com/hyperopt/hyperopt}}, \texttt{OpenTuner}~\footnote{\href{https://github.com/jansel/opentuner}{https://github.com/jansel/opentuner} } a package for ensembling methods~\cite{ansel2014opentuner}, \texttt{NeverGrad}~\cite{rapin2018nevergrad}~\footnote{\href{https://github.com/facebookresearch/nevergrad}{https://github.com/facebookresearch/nevergrad}} a gradient-free optimisation toolbox (where we use the One Plus One optimiser with the associated label \texttt{NeverGrad (1+1)}), \texttt{BOHB}~\cite{falkner2018bohb}~\footnote{ \href{https://github.com/automl/HpBandSter}{https://github.com/automl/HpBandSter}} and \texttt{Dragonfly}~\cite{JMLR:v21:18-223}~\footnote{  \href{https://github.com/dragonfly/dragonfly}{https://github.com/dragonfly/dragonfly}}. Additionally, we carried our modelling improvements to \texttt{TuRBO}~\footnote{ \href{https://github.com/rdturnermtl/bbo_challenge_starter_kit/}{https://github.com/rdturnermtl/bbo\_challenge\_starter\_kit/} \label{footnote:bbomodels}} \cite{2019_turbo}, augmenting the standard GP with mitigation strategies from Section~\ref{Sec:Improve} producing a new baseline that we entitle \texttt{TuRBO+}. Finally, we introduce Heteroscedastic Evolutionary Bayesian Optimisation (\texttt{HEBO}), in which we construct an optimiser with the improvements introduced in Section~\ref{Sec:Improve}. \\

\noindent \textbf{Implementation Details for \texttt{BOHB}:} \texttt{BOHB} is a scalable hyper-parameter tuning algorithm introduced in \cite{falkner2018bohb} mixing bandits and BO approaches to achieve both competitive anytime and final performance. Contrary to the other solvers considered in this paper, \texttt{BOHB} is specifically designed to tackle multi-fidelity optimisation and uses the Hyperband~\cite{li2017hyperband} routine to define the fidelity levels under which points are asynchronously evaluated. The selection of points follows a BO strategy based on the Tree Parzen Estimator (TPE) %~\ref{Bergstra_2011}
method. Given a data set $\mathcal{D}$ of observed data points and a threshold $\alpha\in \mathbb{R}$, the TPE models $p(\bm{x}|y)$, using kernel density estimates of 
\begin{align*}
    \ell(\bm{x}) &= p(y < \alpha|\bm{x}, \mathcal{D}) \\
    g(\bm{x}) &= p(y \geq \alpha | \bm{x},\mathcal{D}). 
\end{align*}
 In the TPE algorithm, maximising the expected improvement criterion
 \begin{equation*}
     \alpha_\text{EI}(\boldsymbol{x}) = \int \max(0, \alpha - p(y|\boldsymbol{x}))  p(y|\boldsymbol{x}) d y
 \end{equation*}  
 is equivalent to maximising the ratio   $r(x) = \tfrac{\ell(\bm{x})}{g(\bm{x})}$  %~\ref{Bergstra_2011}
 which is carried out to select a single new candidate point at a time.

In the absence of a multi-fidelity setup in our experiments, we run a modified version of the \texttt{BOHB} algorithm implemented in the \texttt{HpBandSter} package. We leave the TPE method for modelling unchanged but ignore the fidelity level assignment from Hyperband. Moreover, as our experimental setup involves batch acquisitions, we tested two alternatives to the standard BOHB acquisition procedure to support synchronous suggestion of multiple points. In the first approach, we run $q$ independent maximisation processes of $r(\bm{x})$ from random starting points and recover a single candidate from each process to form the $q$-batch suggestion. In the second approach, we obtain one point as a result of a single maximisation of $r(\bm{x})$ and we sample $q-1$ random points to complete the $q$-batch suggestion. As the latter method yields better overall performance, the results reported under the \texttt{BOHB-BB} label are obtained using the second approach.

\subsection{Black-Box Functions} 
As discussed in Section 3, we evaluate black-box optimisation solvers on a large set of tasks from the \texttt{Bayesmark} package. Each task involves optimising the hyper-parameters of a machine learning algorithm to minimise the cross validation loss incurred when this model is applied in a regression (reg) or a classification (clf) setting for a given data set. 
 Thus, a task is characterised by a model, a data set and a loss function (metric) quantifying the quality of the regression or classification performance. In total, $108$ distinct tasks can be defined from the valid combinations of the nine models specified in Table~\ref{tab:search-space}, the following six real-world UCI datasets \cite{2019_Dua}, Boston (reg), Breast Cancer (clf), Diabetes (reg), Digits (clf), Iris (clf) and Wine (clf); the following two regression metrics, negative mean-squared error (MSE), negative mean absolute error (MAE), and two classification metrics, negative log-likelihood (NLL) and negative accuracy (ACC).
 The results reported in Figures 3 and 4 have been obtained by applying each black-box optimisation method using $16$ iterations of $8$-batch acquisition steps on all of the $108$ tasks. In order to provide a reliable evaluation of the different solvers, we repeated each run with $20$ random seeds and considered the normalised score given by:
 
    \begin{equation}
    \label{eq:score}
        \textbf{Normalised Score} = 100 \times \frac{\mathcal{L} - \mathcal{L}^*}{\mathcal{L}^{\text{rand}} - \mathcal{L}^*}
    \end{equation}
    
where $\mathcal{L}$ is the best-achieved cross validation loss at the end of the $16$ acquisition steps, $\mathcal{L}^*$ is the estimated optimal loss for the task and $\mathcal{L}^{\text{rand}}$ is the mean loss (across multiple runs) obtained using random search with the same number of acquisition steps. The normalisation procedure permits aggregation of the scores across tasks despite the different cross-validation loss functions used. 
% should we mention this set of tasks was used for Neurips competition?

\subsection{Black-Box Optimisation Input Variables} % 
    % https://github.com/uber/bayesmark/blob/master/bayesmark/sklearn_funcs.py
    
    We provide in Table~\ref{tab:search-space} and Table \ref{tab:search-space-reg} the list of the hyper-parameters controlling the behaviour of each model along with their optimisation domains, which can differ depending on whether the model is used for a classification or a regression task. The search domain may include a mix of continuous and integer-valued variables (e.g. the MLP-SGD hyper-parameter set includes an integer-valued hidden layer size, and a continuous-valued initial learning rate that can take on values between $10^{-5}$ and $10^{-1}$). The dimensionality of the input space, i.e. the number of hyper-parameters to tune, ranges from $2$ to $9$. We specify in the final column of the tables whether the search domain is modified through a standard transformation ( $\log$ or $\operatorname{logit}$) in order to facilitate optimisation. 
    
\begin{table}[t!]
\centering
\caption{Search spaces for hyper-parameter tuning on classification tasks. We specify the variable type of each hyper-parameter (with $\mathbb{R}$ for real-valued and $\mathbb{Z}$ for integer- valued) as well as the search domain. We specify $\log-\mathcal{U}$ (resp. $\text{logit}-\mathcal{U}$) to indicate that a $\log$ (resp. $\text{logit}$) transformation is applied to the optimisation domain.}
\label{tab:search-space}
\begin{tabular}{llcl}
\toprule
Model & Parameter  &  Type &  Domain \\
\midrule
\textbf{kNN} &  n\_neighbors & $\mathbb{Z}$ & $\mathcal{U}(1, 25)$ \\ 
\ & p & $\mathbb{Z}$ & $\mathcal{U}(1, 4)$ \\ 
\textbf{Support Vector Machine} &  C & $\mathbb{R}$ & $\log-\mathcal{U}(1, 10^3)$ \\ 
\ & gamma & $\mathbb{R}$ & $\log-\mathcal{U}(10^{-4}, 10^{-3})$ \\ 
\ & tol & $\mathbb{R}$ & $\log-\mathcal{U}(10^{-5}, 10^{-1})$ \\ 
\textbf{Decision Tree} &  max\_depth & $\mathbb{Z}$ & $\mathcal{U}(1, 15)$ \\ 
\ & min\_samples\_split & $\mathbb{R}$ & $\text{logit}-\mathcal{U}(0.01, 0.99)$ \\ 
\ & min\_samples\_leaf & $\mathbb{R}$ & $\text{logit}-\mathcal{U}(0.01, 0.49)$ \\ 
\ & min\_weight\_fraction\_leaf & $\mathbb{R}$ & $\text{logit}-\mathcal{U}(0.01, 0.49)$ \\ 
\ & max\_features & $\mathbb{R}$ & $\text{logit}-\mathcal{U}(0.01, 0.99)$ \\ 
\ & min\_impurity\_decrease & $\mathbb{R}$ & $\mathcal{U}(0, 0.5)$ \\ 
\textbf{Random Forest} &  max\_depth & $\mathbb{Z}$ & $\mathcal{U}(1, 15)$ \\ 
\ & max\_features & $\mathbb{R}$ & $\text{logit}-\mathcal{U}(0.01, 0.99)$ \\ 
\ & min\_samples\_split & $\mathbb{R}$ & $\text{logit}-\mathcal{U}(0.01, 0.99)$ \\ 
\ & min\_samples\_leaf & $\mathbb{R}$ & $\text{logit}-\mathcal{U}(0.01, 0.49)$ \\ 
\ & min\_weight\_fraction\_leaf & $\mathbb{R}$ & $\text{logit}-\mathcal{U}(0.01, 0.49)$ \\ 
\ & min\_impurity\_decrease & $\mathbb{R}$ & $\mathcal{U}(0, 0.5)$ \\ 
\textbf{MLP-Adam} &  hidden\_layer\_sizes & $\mathbb{Z}$ & $\mathcal{U}(50, 200)$ \\ 
\ & alpha & $\mathbb{R}$ & $\log-\mathcal{U}(10^{-5}, 10^{1})$ \\ 
\ & batch\_size & $\mathbb{Z}$ & $\mathcal{U}(10, 250)$ \\ 
\ & learning\_rate\_init & $\mathbb{R}$ & $\log-\mathcal{U}(10^{-5}, 10^{-1})$ \\ 
\ & tol & $\mathbb{R}$ & $\log-\mathcal{U}(10^{-5}, 10^{-1})$ \\ 
\ & validation\_fraction & $\mathbb{R}$ & $\text{logit}-\mathcal{U}(0.1, 0.9)$ \\ 
\ & beta\_1 & $\mathbb{R}$ & $\text{logit}-\mathcal{U}(0.5, 0.99)$ \\ 
\ & beta\_2 & $\mathbb{R}$ & $\text{logit}-\mathcal{U}(0.9, 1 - 10^{-6})$ \\ 
\ & epsilon & $\mathbb{R}$ & $\log-\mathcal{U}(10^{-9}, 10^{-6})$ \\ 
\textbf{MLP-SGD} &  hidden\_layer\_sizes & $\mathbb{Z}$ & $\mathcal{U}(50, 200)$ \\ 
\ & alpha & $\mathbb{R}$ & $\log-\mathcal{U}(10^{-5}, 10^{1})$ \\ 
\ & batch\_size & $\mathbb{Z}$ & $\mathcal{U}(10, 250)$ \\ 
\ & learning\_rate\_init & $\mathbb{R}$ & $\log-\mathcal{U}(10^{-5}, 10^{-1})$ \\ 
\ & power\_t & $\mathbb{R}$ & $\text{logit}-\mathcal{U}(0.1, 0.9)$ \\ 
\ & tol & $\mathbb{R}$ & $\log-\mathcal{U}(10^{-5}, 10^{-1})$ \\ 
\ & momentum & $\mathbb{R}$ & $\text{logit}-\mathcal{U}(0.001, 0.999)$ \\ 
\ & validation\_fraction & $\mathbb{R}$ & $\text{logit}-\mathcal{U}(0.1, 0.9)$ \\ 
\textbf{AdaBoost} &  n\_estimators & $\mathbb{Z}$ & $\mathcal{U}(10, 100)$ \\ 
\ & learning\_rate & $\mathbb{R}$ & $\log-\mathcal{U}(10^{-4}, 10^{1})$ \\ 
\textbf{Lasso} &  C & $\mathbb{R}$ & $\log-\mathcal{U}(10^{-2}, 10^{2})$ \\ 
\ & intercept\_scaling & $\mathbb{R}$ & $\log-\mathcal{U}(10^{-2}, 10^{2})$ \\ 
\textbf{Linear} &  C & $\mathbb{R}$ & $\log-\mathcal{U}(10^{-2}, 10^{2})$ \\ 
\ & intercept\_scaling & $\mathbb{R}$ & $\log-\mathcal{U}(10^{-2}, 10^{2})$ \\ 
\bottomrule
\end{tabular}
\end{table}

\begin{table}[t!]
\centering
    \caption{Models and search spaces for hyper-parameter tuning on regression tasks. Models having the same search spaces for classification and regression tasks are omitted (cf. Table~\ref{tab:search-space}).} 
\label{tab:search-space-reg}
\begin{tabular}{llcl}
\toprule
Model & Parameter  &  Type &  Domain \\
\midrule
\textbf{AdaBoost} &  n\_estimators & $\mathbb{Z}$ & $\mathcal{U}(10, 100)$ \\ 
\ & learning\_rate & $\mathbb{R}$ & $\log-\mathcal{U}(10^{-4}, 10^{1})$ \\ 
\textbf{Lasso} &  alpha & $\mathbb{R}$ & $\log-\mathcal{U}(10^{-2}, 10^{2})$ \\ 
\ & fit\_intercept & $\mathbb{Z}$ & $\mathcal{U}(0, 1)$ \\ 
\ & normalize & $\mathbb{Z}$ & $\mathcal{U}(0, 1)$ \\ 
\ & max\_iter & $\mathbb{Z}$ & $\log-\mathcal{U}(10, 5000)$ \\ 
\ & tol & $\mathbb{R}$ & $\log-\mathcal{U}(10^{-5}, 10^{-1})$ \\ 
\ & positive & $\mathbb{Z}$ & $\mathcal{U}(0, 1)$ \\ 
\textbf{Linear} &  alpha & $\mathbb{R}$ & $\log-\mathcal{U}(10^{-2}, 10^{2})$ \\ 
\ & fit\_intercept & $\mathbb{Z}$ & $\mathcal{U}(0, 1)$ \\ 
\ & normalize & $\mathbb{Z}$ & $\mathcal{U}(0, 1)$ \\ 
\ & max\_iter & $\mathbb{Z}$ & $\log-\mathcal{U}(10, 5000)$ \\ 
\ & tol & $\mathbb{R}$ & $\log-\mathcal{U}(10^{-4}, 10^{-1})$ \\ 
\bottomrule
\end{tabular}
\end{table}

Table~\ref{tab:summary-perf-compare} synthesises the performance achieved on the $108$ tasks by the black-box optimisation solvers considered in our experiments. We note that the distribution of the scores attained by \texttt{HEBO} has the largest mean and the smallest standard deviation, indicating that \texttt{HEBO} significantly outperforms competitor algorithms. 

    \begin{table}[t!]
    \centering
\begin{tabular}{lrrrrrrrr}
\toprule
      Algorithm &    Mean &    Std &     Median &     40$^{th}$ Centile &    30$^{th}$ Centile &    20$^{th}$ Centile &  5$^{th}$ Centile \\
\midrule
          \texttt{HEBO} &  $\bm{100.12}$ &   $\bm{8.70}$ &  $\bm{100.01}$ &  $\bm{100.00}$ &  $\bm{99.88}$ &  $\bm{98.64}$ &  $\bm{85.71}$ \\
          \texttt{PySOt} &   98.18 &   9.03 &  100.00 &   99.81 &  98.60 &  95.36 &  80.00 \\
          \texttt{TuRBO} &   97.95 &  10.80 &  100.00 &   99.88 &  98.75 &  95.26 &  78.63 \\
       \texttt{HyperOpt} &   96.37 &   8.79 &   99.31 &   98.16 &  95.94 &  92.38 &   78.52 \\
          \texttt{SkOpt} &   96.18 &  11.51 &   99.78 &   98.66 &  96.73 &  91.62 &  74.77 \\
\texttt{TuRBO+} &   95.29 &  10.93 &   98.97 &   97.60 &  95.27 &  90.92  &  74.77 \\
      \texttt{OpenTuner} &   94.32 &  14.18 &   98.44 &   96.93 &  93.84 &  89.97  &  68.96 \\
\texttt{Nevergrad (1+1)} &   93.20 &  17.52 &   99.65 &   97.84 &  94.57 &  88.28 &  55.34 \\
\texttt{BOHB} &   92.03 &  11.16  &  96.02 &  93.55 &  90.14  & 85.71   & 67.82 \\
  \texttt{Random-Search} &   92.00 &  11.71 &   96.18 &   93.55 &  90.05 &  85.16  &  69.55 \\
\bottomrule
\end{tabular}
    \caption{Mean and n-th percentile normalised scores over $108$ black-box functions, each repeated with 20 random seeds. We observe significant mean improvements from \texttt{HEBO} compared to all competitor algorithms.
}
    \label{tab:summary-perf-compare}
\end{table}

    % \begin{figure*}
    % \centering
    % \includegraphics[width=.87\textwidth]{resources/performance_plot/summary_best_datasets_models_all.pdf}
    % \caption{Normalised score (see \ref{eq:score}) obtained by each black-box optimiser on all tasks. On each graph optimisers are ranked by mean scores. Overall we see \texttt{HEBO} outperforming all other baselines in 32 out of 54 tasks ($59\%$).}
    % \label{fig:summary_all_models}
    % \end{figure*}
%TODO: Add table off all 108 results and count this up and double check. 

\begin{figure*}%
\centering
% \vspace{-1.4em}
\subfloat[Empirical Performance Gain]{%
\includegraphics[width=0.8\textwidth]{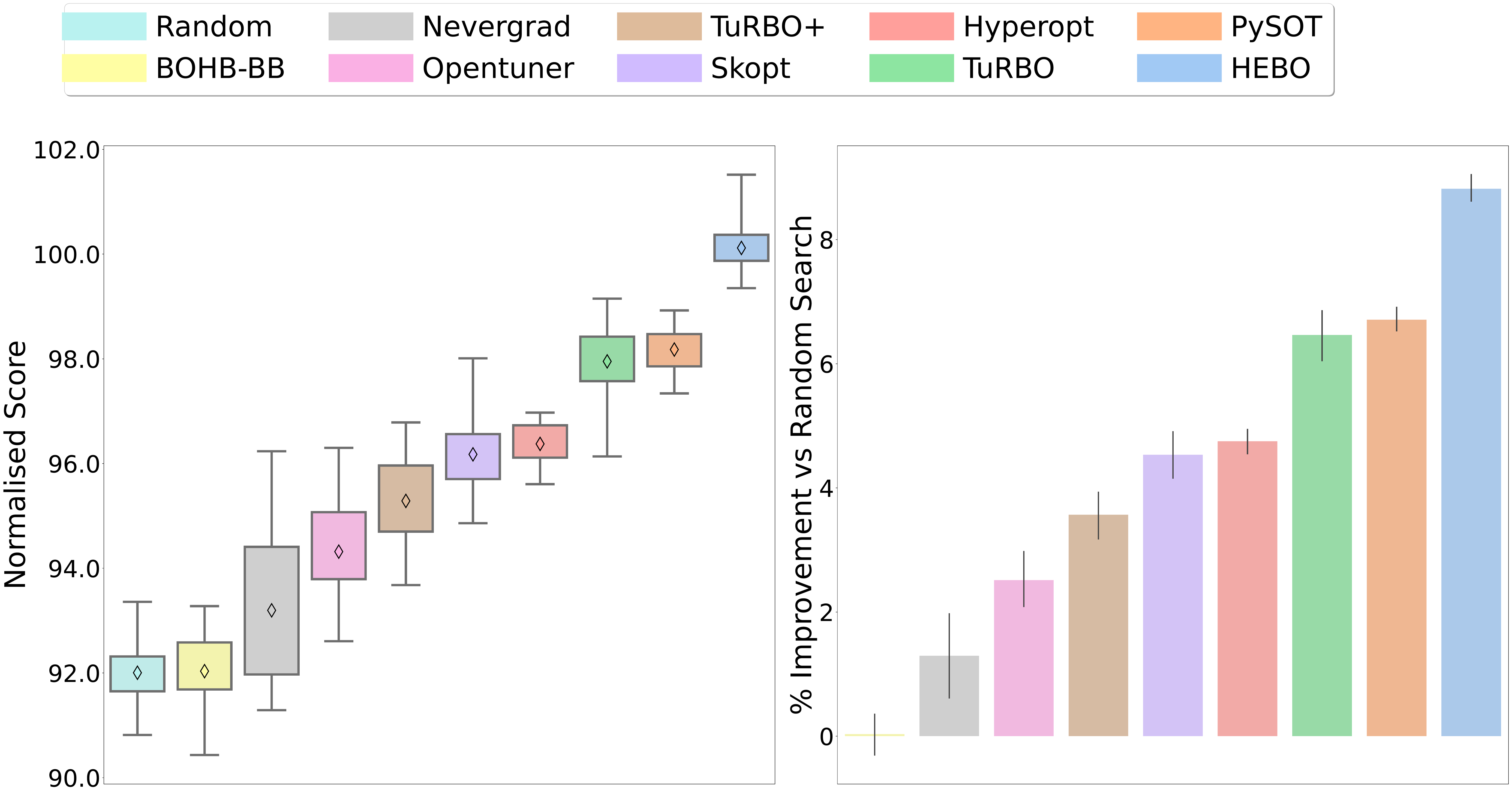}}%
% \qquad
% \\
% \subfloat[Ablation Study]{%
% \includegraphics[width=0.7\textwidth]{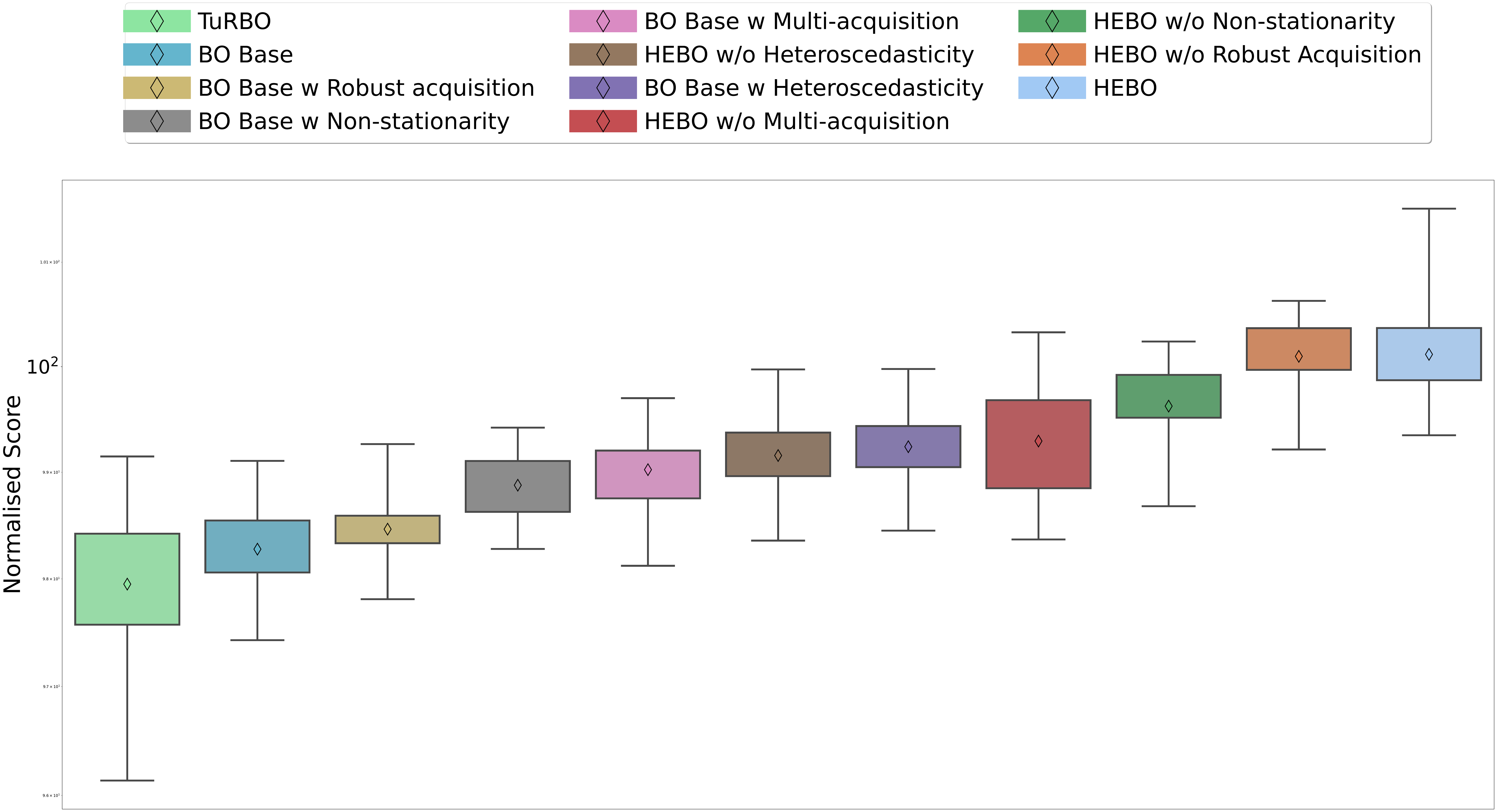}}
\caption{Analysis of the results on 108 tuning tasks. (Left) Normalised score comparison demonstrating that \texttt{HEBO} (i.e., BO with improvements from Section~\ref{Sec:Improve}) outperforms competitor algorithms. We observe a 5\% relative improvement to SOTA optimisers such as TuRBO. (Right) \texttt{HEBO} yields an 8\% improvement compared to random search.}
% \vspace{-2.0em}
\label{Fig:ResOne1}
\end{figure*}

Figure~\ref{fig:best_models_datasets} and Figure~\ref{Fig:ResOne1} demonstrates gains from adopting the general \texttt{HEBO} framework. We note that due to optimising over numerous regression and classification metrics, we show that irrelevant of the validation score HEBO performs better than other optimisers. In Figure~~\ref{Fig:ResOne2}, we compare \texttt{HEBO} against baselines and report up to an $8\%$ performance gain relative to a random search strategy. It is also worth noting that \texttt{TuRBO+} tends to underperform~\footnote{We believe this due to the trust region not being modelled correctly with input warping.}, achieving ca. $4 \%$ improvement relative to random search. We believe such a result is related to the interplay between our approach's capabilities to address heteroscedasticity and non-stationarity as well as the size of the trust regions; an interesting avenue that we plan to explore in future work, as well as experimenting with deeper neural networks as well as other architectures such as convolutional/ recurrent neural networks. Overall, $\texttt{HEBO}$ achieves the highest normalised mean scores on 74 of the 108 datasets. Complete results on all tasks may be found in Appendix~\ref{fig:summary_all_models}. \\

\textbf{Comparison to Asynchronous BO Algorithms:} We perform a comparison to black-box optimisers, such as \texttt{Dragonfly} and \texttt{BOHB}, which operate in the asynchronous setting. We run each method for 100 iterations of data collection with a single query location per iteration. We label the asynchronous algorithms without their multi-fidelity components with an addition BB for black-box optimiser (\texttt{Dragonfly-BB} and \texttt{BOHB-BB}) to assess black-box optimisation performance only. The results of Figure~\autoref{singlebatch} show that in the asynchronous setting, both \texttt{Dragonfly-BB} and \texttt{BOHB-BB} under-perform relative to other black-box optimisers, with \texttt{HEBO} performing best. However, this result is not surprising as asynchronous methods trade off sample efficiency with speed. Nevertheless, this experiment reveals a large gap in suggestion power between SOTA asynchronous and synchronous methods.

% \begin{wrapfigure}{l}{0.3\linewidth}
% \vspace{-1.6em}
% \begin{center}
%     \includegraphics[width=1.0\linewidth]{}
%     \caption{{100x1 on 108 tasks.}}
%     \label{singlebatch}
% \end{center}
% % \begin{center}
% %     \includegraphics[width=1.0\linewidth]{}
% %     \caption{{Ablation study adding in one component at a time.}}
% %     \label{singlebatch}
% % \end{center}
% \vspace{-2.5em}
% \end{wrapfigure}
\begin{figure}[t!]%
\centering
% \vspace{-1.4em}
\subfloat[Competitor Comparison.]{%
\includegraphics[width=0.3\linewidth]{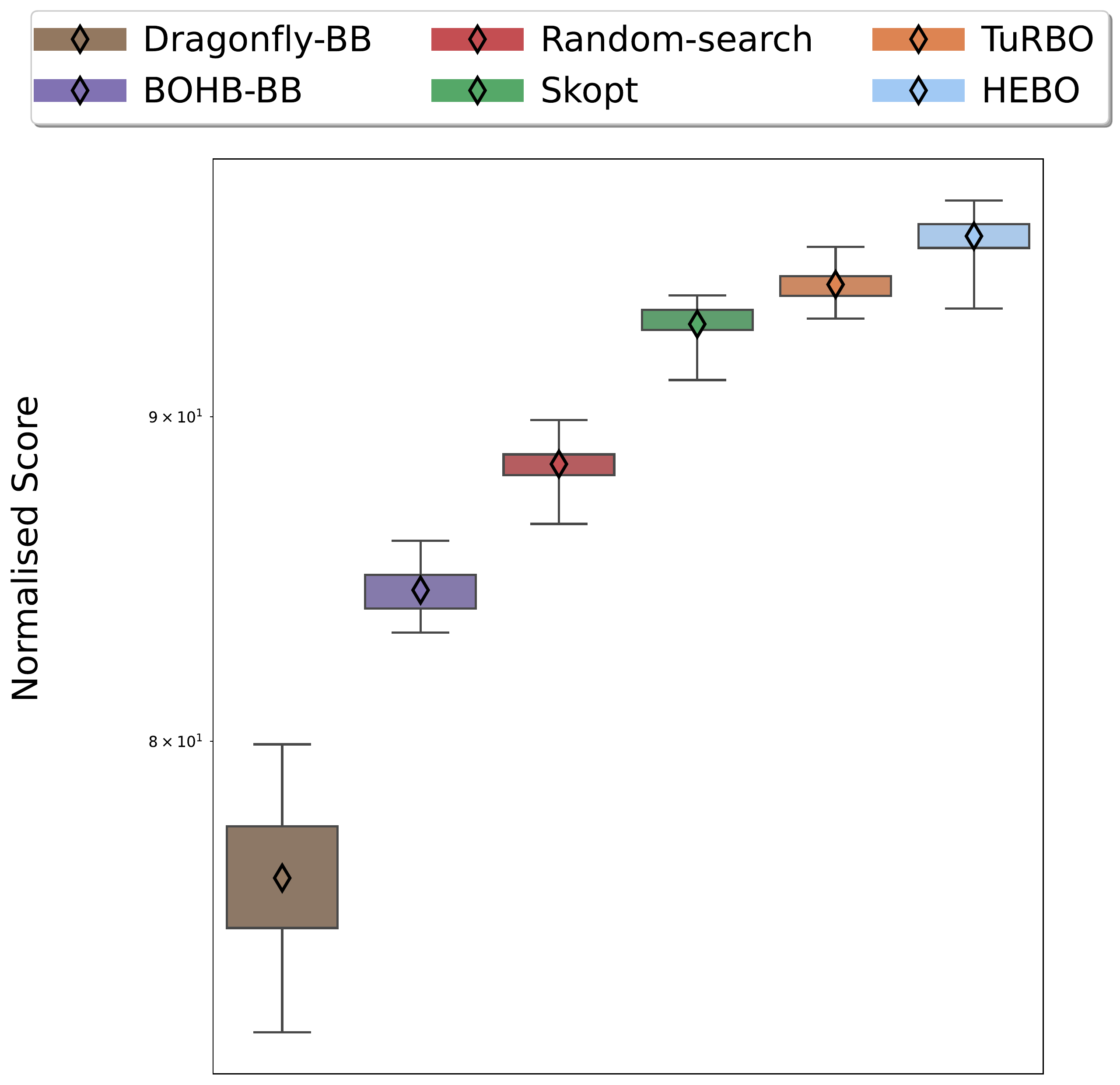}\label{singlebatch}}
\qquad
% \\
\subfloat[Ablation Study]{%
\includegraphics[width=0.6\linewidth]{resources/summary_plot_ablation_combined.pdf}\label{ablation}}
\caption{(a) We compare \texttt{HEBO} against several popular hyper-parameter tuning approaches including \texttt{BOHB-BB} and \texttt{Dragonfly-BB}, running all methods for 100 iterations with a batch size of 1 (i.e. one set of hyper-parameters queried per iteration). \texttt{BOHB-BB} and \texttt{Dragonfly-BB} feature asynchronous queries, suggesting a batch of one set of hyper-parameters at each iteration. We remove the multi-fidelity components from \texttt{BOHB} and \texttt{Dragonfly} to assess Black-Box optimisation alone, hence the additional BB appended to their label. (b) Ablation study where X denotes a general component of \texttt{HEBO}. \texttt{HEBO} w/o X takes one component X out at a time and BO Base w X adds one component X in at a time. We show \texttt{TuRBO} as a baseline and refer to \texttt{HEBO} with all significant components removed as \texttt{BO Base}. The ablation demonstrates that the corrections for each misspecified modelling assumption yield a tangible gain in empirical performance.}

% \vspace{-2.0em}
\label{Fig:ResOne2}
\end{figure}

\subsection{Ablation Results} To better understand the relative importance of each component of the \texttt{HEBO} algorithm, we conduct an ablation study by first removing each component of \texttt{HEBO} and testing the remaining components and second, by starting with basic BO and sequentially adding and testing each component of \texttt{HEBO}. The components comprise the consideration of heteroscedasticity, non-stationarity and robustness, as well as the use of a multiobjective acquisition function. We report average normalised scores in Figure~\autoref{ablation}. The precedence order observed is: heteroscedasticity, multi-objective acquisition functions, non-stationarity and robustness. 
\section{Related Work}
We introduce work on the following topics relating to modelling, acquisition and optimisers in Bayesian optimisation: \\

\noindent{\textbf{Heteroscedasticity with output transforms:}} Among various approaches to handling heteroscedasticity~\cite{2007_Kersting,2011_Lazaro,2013_Kuindersma,2017_Calandra,2021_Griffiths}, transforming the output variables is a straightforward option giving rise to warped Gaussian processes \cite{snelson2004warped}. More recently, output transformations have been extended to compositions of elementary functions \cite{2019_Rios} and normalising flows \cite{2015_Rezende,2020_Maronas}. Output transformations have not featured prominently in the Bayesian optimisation literature, perhaps due to the commonly-held opinion that warped GPs require more data relative to standard GPs in order to function as effective surrogates \cite{2020_Nguyen}. Rather than introduce additional hyper-parameters to the GP, we enable efficient output warping through methods that only require pre-training. Recent work \cite{2021_Eriksson} has also investigated Gaussian copula transforms which may prove to be particularly effective in situations where there are outliers. \\

\noindent{\textbf{Non-stationarity with input warping:}} Many surrogate models with input warping exist for optimising non-stationary black-box objectives \cite{snoek2014input,2016_Calandra,2018_Oh} and have enjoyed particular success in hyper-parameter tuning where the natural scale of parameters is often logarithmic. Traditionally, a Beta cumulative distribution function is used. In this paper, we adopt the \texttt{Kumaraswamy} warping which is another instance of the generalised Beta class of distributions which we have observed to achieve superior performance~\cite{snoek2014input}~\footnote{For clarity we note that the input warping function used in~\cite{snoek2014input} is the same one used in this work.}; confirming results reported in~\cite{balandat2020botorch}.     \\  

\noindent{\textbf{Multi-objective acquisition ensembles:}} Multi-objective acquisition ensembles were first proposed in \cite{lyu2018batch} and are closely related to portfolios of acquisition functions \cite{2011_Hoffman,2014_Shahriari,balandat2020botorch}. In this form, the optimisation problem involves at least two conflicting and expensive black-box objectives and as such, solutions are located along the Pareto-efficient frontier. The multi-objective acquisition ensemble employs these ideas to find a Pareto-efficient solution amongst multiple acquisition functions. Although we utilised the multi-objective acquisition ensemble, we note that our framework is solver agnostic in so far as any multi-objective optimiser~\cite{2019_Abdolshah} may be applied.  \\

\noindent{\textbf{Robustness of Acquisitions:}} Methods achieving robustness with respect to either surrogates \cite{2020_Park} or the optimisation process \cite{2018_Bogunovic,2010_Bertsimas} have been previously proposed. Most relevant to our setting, is the approach of \cite{2018_Bogunovic} that introduces robustness to BO by solving a $\max \min$ objective to determine optimal input perturbations. Their method, however, relies on gradient ascent-descent-type algorithms that require real-valued variables and are not guaranteed to converge in the general non-convex, non-concave setting~\cite{MJ}. On the other hand, our solution possesses two advantages: 1) simplicity of implementation as we merely require random perturbations of acquisition functions to guarantee robustness, and 2) support for mixed variable solutions through the use of evolutionary solvers.

\section{Conclusion \& Future Work}
In this paper, we presented an in-depth empirical study of Bayesian optimisation for hyper-parameter tuning tasks. We demonstrated that even the simplest among machine learning problems can exhibit heteroscedasticity and non-stationarity. We also reflected on the affects of misspecified models and conflicting acquisition functions. We augmented BO algorithms with various enhancements and revealed that with a revised set of assumptions BO can in fact act as a competitive baseline in hyper-parameter tuning. We highlight the large discrepancy between suggestion power of synchronous and asynchronous methods. We hope for future work to focus on integrating the best of asynchronous and synchronous methods for optimal performance. We hope this paper's findings can guide the community when employing black-box and Bayesian optimisation in practice. 

% \acks{The authors wish to thank ..
% }

\appendix
%%%%%%%%%%%%%%%%%%%%%%
%%%%%%%%%%%%%%%%%%%%%%
%%%%%%%%%%%%%%%%%%%%%%
%%%%%%%%%%%%%%%%%%%%%%
%%%%%%%%%%%%%%%%%%%%%%
%%%%%%%%%%%%%%%%%%%%%%
%%%%%%%%%%%%%%%%%%%%%%%%%%%%%%%%%%%%%%%%%%%%
%%%%%%%%%%%%%%%%%%%%%%
%%%%%%%%%%%%%%%%%%%%%%
%%%%%%%%%%%%%%%%%%%%%%
%%%%%%%%%%%%%%%%%%%%%%
%%%%%%%%%%%%%%%%%%%%%%
%%%%%%%%%%%%%%%%%%%%%%
%%%%%%%%%%%%%%%%%%%%%%
%%%%%%%%%%%%%%%%%%%%%%
\section{Addition Detail Of Hypothesis Tests}\label{test:additionalinfo}

\noindent \textbf{Levene's Test}
Levene's test statistic is defined as
\begin{equation*}
    W = \frac{N - k}{k - 1} \cdot \frac{\sum_{i=1}^{k} n (\bar{Z}_{i \cdot} - \bar{Z}_{\cdot \cdot})^2}{\sum_{i=1}^{k} \sum_{j=1}^{n} (Z_{i j} - \bar{Z}_{i \cdot})^2},
\end{equation*}
where $N = k\times n$, $Z_{i j} = |Y_{i j} - \tfrac{1}{n}\sum_{j=1}^{n} Y_{i j}|$, $\bar{Z}_{i \cdot} = \tfrac{1}{n}\sum_{j=1}^{n} Z_{i j}$ and $\bar{Z}_{\cdot \cdot} = \tfrac{1}{k}\sum_{i=1}^{k} \bar{Z}_{i\cdot}$, for all $i = 1, \dots, k$, $j = 1, \dots, n$.
Levene's test rejects the null hypothesis of homoscedasticity $H_0$ if
\begin{equation*}
    W > F_{\alpha, k-1, N - k},
\end{equation*}
where $F_{\alpha, k-1, N - k}$ is the upper critical value at a significance level $\alpha$ of the $F$ distribution with $k-1$ and $N-k$ degrees of freedom. The Fligner-Killeen test is an alternative to Levene's test that is particularly robust to outliers.  \\

\noindent \textbf{Fligner-Killeen Test:}
Computation of the Fligner-Killeen test involves ranking all the absolute values $\{|Y_{i j} - \Tilde{Y}_i|\}_{1\leq i \leq k, 1 \leq j \leq n}$, where $\Tilde{Y}_i$ is the median of $\{Y_{i j}\}_{1 \leq j \leq n}$. Increasing scores $a_{N,r} = \Phi^{-1}\left(\tfrac{1 + \tfrac{r}{N + 1}}{2}\right)$ are associated with each rank $r = 1,\dots, N$, where $N =kn$ and $\Phi(\cdot)$ is the cumulative distribution function for a standard normal random variable. We denote the rank score associated with $Y_{ij}$ as $r_{ij}$. The Fligner-Killeen test statistic is given by

\begin{equation*}
    \chi_{o}^{2}=\frac{\sum_{i=1}^{k} n\left(\bar{A}_{i}-\bar{a}\right)^{2}}{V^{2}},
\end{equation*}

\noindent where $\bar{A}_i = \tfrac{1}{n}\sum_{j=1}^{n} a_{N, r_ij}$, $\bar{a} = \tfrac{1}{N} \sum_{r=1}^{N} a_{N, r}$ and $V^2 = \tfrac{1}{N - 1}\sum_{r=1}^N (a_{N,r} - \bar{a})^2$. As $\chi_0$ has an asymptotic $\mathcal{X}^2$ distribution with $(k-1)$ degrees of freedom, the test rejects the null hypothesis of homoscedasticity $H_0$ if

\begin{equation*}
    \chi_0 > \mathcal{X}^2_{\alpha, k-1}
\end{equation*}

\noindent where $\mathcal{X}^2_{\alpha, k-1}$ is the upper critical value at a significance level $\alpha$ of the $\mathcal{X}^2$ distribution with $k-1$ degrees of freedom.

\section{Details of Robust Acquisition}\label{app:robacq}

Though appealing, our formulation still assumes access to the GP hyper-parameters which complicates the implementation by restricting models and optimisers to the same programming paradigm. Ideally, we would wish to illicit robustness through only the GP predictive mean and predictive variance. Fortunately, we are able to show that upon a simple acquisition perturbation it becomes possible to approximate $\alpha_{\text{rob}}(\cdot)$ above. As such, we demonstrate that robust acquisition formulations are achievable using only the GP predictive mean and variance. \\ 

\begin{theorem}
Let us consider the  stochastic version of the acquisition function utilised in HEBO and given by $\overline{\alpha}^{\bm{\theta}} (\bm{x}|\mathcal{D}) = \alpha^{\bm{\theta}} (\bm{x}|\mathcal{D}) + \eta \sigma_{n}$ with $\eta \sim \mathcal{N}(0, 1)$ and  standard deviation parameter $\sigma_{n}>0$ \footnote{We note that gradient-based algorithms remain applicable upon addition of the $\eta \sigma_n$ term. In our formulation however, we use an evolutionary method which utilises acquisition function values. Consequently, the path followed by the optimiser will be altered based on $\eta$ samples leading to more robust query locations.}. Let $\alpha^{\bm{\theta}}_{\text{rob.}}(\bm{x}|\mathcal{D}) \equiv \mathbb{E}_{\epsilon \sim \mathcal{N}(\bm{0}, \sigma_{\epsilon}^{2}\bm{I})}\left[\alpha^{\bm{\theta}+\epsilon}(\bm{x}|\mathcal{D})\right]$ be the robust form of the standard acquisition function given as expectation over random perturbation of parameter $\bm{\theta}$. Then, by properly choosing parameters $\sigma_n$ and  $\sigma_{\epsilon}$ with high probability\footnote{Here we use the common approach for proving stochastic expressions with high probability (see \cite{jordan_cubic}, \cite{AZ01}). Specifically, we show that for any confidence parameter   $\delta\in(0,1)$ the stochastic expression under consideration is valid with probability at least $1 - \delta$.}, \text{HEBO} acquisition function  $\overline{\alpha}^{\bm{\theta}} (\bm{x}|\mathcal{D}) $ accurately approximates the robust acquisition function $\alpha^{\bm{\theta}}_{\text{rob.}}(\bm{x}|\mathcal{D})$ for any $\bm{\theta},\bm{x}$. More formally, for any $\rho\in(0,1)$ and $\delta \in(0,1)$, there are parameters $\sigma_n = \sigma_n(\rho,\delta)$ and $\sigma_{\epsilon} = \sigma_{\epsilon}(\rho, \delta)$ such that:    
\begin{equation*}
    \left|\overline{\alpha}^{\bm{\theta}} (\bm{x}|\mathcal{D}) - \alpha^{\bm{\theta}}_{\text{rob.}}(\bm{x}|\mathcal{D})\right| \leq \rho,  \ \ \ \ \ \ \forall \bm{\theta,x}
\end{equation*}
with probability at least $1 - \delta$.\\ 
\end{theorem}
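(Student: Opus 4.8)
The plan is to decompose the quantity of interest into a deterministic ``averaging bias'' and a purely stochastic term, and to control each separately. Writing out the definitions,
\begin{equation*}
\overline{\alpha}^{\bm{\theta}}(\bm{x}|\mathcal{D}) - \alpha^{\bm{\theta}}_{\text{rob.}}(\bm{x}|\mathcal{D}) = \underbrace{\left(\alpha^{\bm{\theta}}(\bm{x}|\mathcal{D}) - \mathbb{E}_{\epsilon}\!\left[\alpha^{\bm{\theta}+\epsilon}(\bm{x}|\mathcal{D})\right]\right)}_{\text{averaging bias } B(\bm{\theta},\bm{x})} + \eta\sigma_n,
\end{equation*}
so that by the triangle inequality it suffices to force $|B(\bm{\theta},\bm{x})| \le \rho/2$ uniformly in $(\bm{\theta},\bm{x})$ through a suitable choice of $\sigma_\epsilon$, and $|\eta\sigma_n| \le \rho/2$ with probability at least $1-\delta$ through a suitable choice of $\sigma_n$. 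The key observation enabling the uniform statement is that the randomness of $\overline{\alpha}$ resides \emph{only} in $\eta$: the robust target $\alpha^{\bm{\theta}}_{\text{rob.}}$ is a deterministic expectation and $\alpha^{\bm{\theta}}$ is deterministic, while $\eta\sigma_n$ is a single constant offset independent of $(\bm{\theta},\bm{x})$. Hence one high-probability event on the single draw of $\eta$ validates the bound simultaneously for all arguments.

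For the averaging bias I would invoke regularity of the acquisition in its hyper-parameters. Assuming $\bm{\theta}\mapsto\alpha^{\bm{\theta}}(\bm{x}|\mathcal{D})$ is Lipschitz in $\bm{\theta}$ with a constant $L$ that is uniform over $\bm{x}$ (which holds on compact hyper-parameter and input domains, since the GP posterior mean and variance depend smoothly on $\bm{\theta}$ and the reparameterised EI/PI/UCB functionals are smooth expectations over the posterior), the convexity of $|\cdot|$ gives
\begin{equation*}
|B(\bm{\theta},\bm{x})| = \left|\mathbb{E}_{\epsilon}\!\left[\alpha^{\bm{\theta}}(\bm{x}|\mathcal{D}) - \alpha^{\bm{\theta}+\epsilon}(\bm{x}|\mathcal{D})\right]\right| \le L\,\mathbb{E}_{\epsilon}\|\epsilon\| \le L\sqrt{p}\,\sigma_\epsilon,
\end{equation*}
using $\mathbb{E}_{\epsilon}\|\epsilon\| \le (\mathbb{E}_{\epsilon}\|\epsilon\|^2)^{1/2} = \sqrt{p}\,\sigma_\epsilon$ for $\epsilon\sim\mathcal{N}(\bm{0},\sigma_\epsilon^2\bm{I})$ in $\mathbb{R}^p$. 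Choosing $\sigma_\epsilon(\rho) \le \rho/(2L\sqrt{p})$ then yields $|B|\le\rho/2$ for every $(\bm{\theta},\bm{x})$. A second-order Taylor expansion sharpens this to $O(\sigma_\epsilon^2)$, since $\mathbb{E}[\epsilon]=\bm{0}$ kills the linear term and $\mathbb{E}[\epsilon\epsilon^{\mathsf{T}}]=\sigma_\epsilon^2\bm{I}$ leaves only $\tfrac{\sigma_\epsilon^2}{2}\operatorname{tr}(\nabla^2_{\bm{\theta}}\alpha)$, but the Lipschitz bound already suffices.

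For the stochastic term I would use the standard sub-Gaussian tail inequality $\mathbb{P}(|\eta|>t)\le 2\exp(-t^2/2)$ for $\eta\sim\mathcal{N}(0,1)$. Setting $t=\rho/(2\sigma_n)$ and demanding $2\exp(-\rho^2/(8\sigma_n^2))\le\delta$ gives the explicit prescription $\sigma_n(\rho,\delta) = \rho/\big(2\sqrt{2\ln(2/\delta)}\big)$, under which $|\eta\sigma_n|\le\rho/2$ on an event of probability at least $1-\delta$. Combining the two bounds on this event through the triangle inequality delivers the claim. I expect the main obstacle to be the \emph{uniform} control of the averaging bias: securing a single Lipschitz (or bounded-Hessian) constant for $\alpha^{\bm{\theta}}$ in $\bm{\theta}$ valid across all $\bm{x}$ and all admissible $\bm{\theta}$. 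This reduces to bounding the hyper-parameter gradients of the GP posterior moments on the compact domains in play; the PI acquisition needs slight extra care, as its reparameterised form involves an indicator, but the posterior expectation regularises it into a smooth function of the posterior moments. Once this regularity is established, the remaining Gaussian-moment and tail estimates are routine.
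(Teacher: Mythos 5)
Your proof is correct and takes a genuinely different, more direct route than the paper's. You split $\overline{\alpha}^{\bm{\theta}}(\bm{x}|\mathcal{D}) - \alpha^{\bm{\theta}}_{\text{rob.}}(\bm{x}|\mathcal{D})$ into a deterministic averaging bias plus the single noise term $\eta\sigma_n$, control the bias by Jensen's inequality and a uniform Lipschitz constant, $|B(\bm{\theta},\bm{x})| \le L\,\mathbb{E}_{\epsilon}\|\epsilon\| \le L\sqrt{p}\,\sigma_{\epsilon}$, and control the noise by a Gaussian tail bound on the one shared draw of $\eta$, giving clean closed-form prescriptions $\sigma_{\epsilon} = \rho/(2L\sqrt{p})$ and $\sigma_n = \rho/\bigl(2\sqrt{2\ln(2/\delta)}\bigr)$. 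The paper instead inserts a Monte-Carlo estimator $\hat{\alpha}^{\bm{\theta}}(\bm{x}|\mathcal{D}) = N_{\epsilon}^{-1}\sum_{j}\alpha^{\bm{\theta}+\bm{\epsilon}_j}(\bm{x}|\mathcal{D})$ between the two quantities, bounds $|\hat{\alpha} - \alpha^{\bm{\theta}}_{\text{rob.}}|$ by Chebyshev's inequality, and bounds $|\overline{\alpha}^{\bm{\theta}} - \alpha^{\bm{\theta}+\bm{\epsilon}_j}|$ per sample via a first-order Taylor expansion combined with norm concentration of $\bm{\epsilon}_j$ and a Gaussian quantile for $\eta$, with union bounds over the $N_{\epsilon}$ samples; its final parameter choices therefore involve $N_{\epsilon}$ and $\Phi^{-1}(1 - \delta/(8N_{\epsilon}))$, and carry $o(1)$ remainder terms that your Lipschitz argument eliminates entirely. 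What the two routes share is the real technical content: your constant $L$ is precisely the bound $A_1$ on $\bigl\|\nabla_{\bm{\theta}}\bigl[\mu_{\bm{\theta}}(\bm{x}|\mathcal{D}) + \sqrt{\beta\pi/2}\,\sigma_{\bm{\theta}}(\bm{x}|\mathcal{D})\bigr]\bigr\|_2$ that the paper's Claim spends most of the appendix deriving from kernel regularity conditions, so a complete version of your proof must either reproduce that derivation or take such bounds as hypotheses. Your identification that a single event on $\eta$ yields the uniform-in-$(\bm{\theta},\bm{x})$ statement is also the correct reading of the theorem, and is cleaner than the paper's handling of this point.

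One detail to tighten: since $\epsilon$ is Gaussian and unbounded, $\bm{\theta}+\epsilon$ ranges over all of $\mathbb{R}^{p}$, so the Lipschitz constant must hold globally in $\bm{\theta}$ and not merely on a compact hyper-parameter domain, as your parenthetical justification suggests. The paper secures this by assuming its kernel bounds (the lower bound $M_0$ on $k_{\bm{\theta}}(\bm{x},\bm{x})$ and the upper bounds $M_1, M_2$ on the kernel and its $\bm{\theta}$-gradient) uniformly over all $\bm{\theta}\in\mathbb{R}^{p}$; you would need the analogous global hypothesis for your bound $|B| \le L\sqrt{p}\,\sigma_{\epsilon}$ to be legitimate.
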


%The bound above facilitates robust versions of common acquisition functions.  For instance, if we wish to construct a robust UCB acquisition, we simply add $\eta \sigma_n$ to the posterior mean. 

\subsubsection{Proof of the Robustness Bound}\label{sec:prooflemma}

% \subsection{Yeo-Johnson Transform}\label{Sec:YeoJohn}
% When labels take on arbitrary values, we use a \texttt{Yeo-Johnson} transform instead of \texttt{Box-Cox}. Such a transformation operates as follows: 
% \begin{align*}
%         \texttt{Y.J.}_{\zeta} (y_l) = \left\{\begin{array}{lr}
%       \frac{(y_l + 1)^{\zeta} - 1}{\zeta}, &  \text{if $\zeta \neq 0$, $y_l \geq 0$}\\
%         \log (y_l + 1), &   \text{if $\zeta  = 0$, $y_l \geq 0$}\\
%         \frac{(1 - y_l)^{2 - \zeta} - 1}{\zeta - 2} &   \text{if $\zeta  \neq 2$, $y_l < 0$}\\
%         - \log (1 - y_l)    & \text{if $\zeta  = 2$, $y_l < 0$.}
%         \end{array}\right.
% \end{align*}
% Analogous to the \texttt{Box-Cox} transform, the \texttt{Yeo-Johnson}'s parameter is fit based on observed data solving the following 1-dimensional optimisation problem: 
% \begin{align*}
% \max_{\zeta} &-\frac{n}{2} \log \left[\frac{\sum_{j=1}^n (\texttt{Y.J.}_{\zeta}({y}_l) - \overline{\texttt{Y.J.}_{\zeta}(\bm{y})})^2}{n - 1}\right] + (\zeta - 1) \sum_{i=1}^n \left[\text{sign}({y}_l) \log(|{y}_l|+1)\right],
% \end{align*}
% with $\overline{\texttt{Y.J.}_{\zeta}(\bm{y})}$ being the sample average computed after applying the \texttt{Yeo-Johnson} transformation.

Let $\delta \in (0,1)$ be the desired probability threshold, and $\rho\in(0,1)$ be a desired accuracy parameter. Consider the GP with mean function $m(\bm{x})$ and covariance function $k_{\bm{\theta}}(\bm{x},\bm{x}^{\prime})$ such that $\forall \bm{x},\bm{x}^{\prime}\in\mathcal{X}$, $\bm{\theta}\in\mathbb{R}^{p}$:  
\begin{align}\label{lemma_conditions}
    &|k_{\bm{\theta}}(\bm{x},\bm{x})|\ge M_0,\ \  |k_{\bm{\theta}}(\bm{x},\bm{x}^{\prime})|\le M_1, \\\nonumber
    &||\nabla_{\boldsymbol{\theta}}k_{\bm{\theta}}(\bm{x},\bm{x}^{\prime})||_2\le M_2, \ \ |m(\bm{x})| \le M_4.
\end{align}
Moreover, assume that observations $y\in\mathcal{D}$ are bounded, i.e. $|y|\le C$ and let $\overline{\alpha}^{\bm{\theta}} (\bm{x}|\mathcal{D}) = \alpha^{\bm{\theta}} (\bm{x}|\mathcal{D}) + \eta \sigma_{n}$ with $\eta$ a standard normal random variable. Then, we are going to show that there are values $c_1 = c_1(\rho, \delta)$ and $c_2 = c_2(\rho,\delta)$, such that choosing  $\sigma_{n} \le c_1$ and $\sigma_{\epsilon}\le c_2$: 
\begin{equation*}
    \left|\overline{\alpha}^{\bm{\theta}} (\bm{x}|\mathcal{D}) - \mathbb{E}_{\epsilon \sim \mathcal{N}(\bm{0}, \sigma_{\epsilon}^{2}\bm{I})}\left[\alpha^{\bm{\theta}+\epsilon}(\bm{x}|\mathcal{D})\right]\right| \leq \rho. 
\end{equation*}
with probability at least $1 - \delta$. Note, the robust form of the acquisition function given as $\alpha^{\bm{\theta}}_{\text{rob.}}(\bm{x}|\mathcal{D}) \equiv \mathbb{E}_{\epsilon \sim \mathcal{N}(\bm{0}, \sigma_{\epsilon}^{2}\bm{I})}\left[\alpha^{\bm{\theta}+\epsilon}(\bm{x}|\mathcal{D})\right]$ constitutes an intractable integral. Therefore, in order to be optimised during the course of Bayesian optimisation, the intractable integral must be replaced by an accurate approximation. Without loss of generality we choose the UCB acquisition function  $\alpha^{\bm{\theta}}(\bm{x}|\mathcal{D}) = \alpha_{\text{UCB}}^{\bm{\theta}}(\bm{x}|\mathcal{D})$ and to avoid technical complications relating to multivariate calculus we consider a batch size $q=1$.
In this case, the UCB acquisition function can be written as $\alpha_{\text{UCB}}^{\bm{\theta}}(\bm{x}|\mathcal{D}) = \mu_{\bm{\theta}}(\bm{x}|\mathcal{D}) + \sqrt{\frac{\beta\pi}{2}}\sigma_{\boldsymbol{\theta}}(\bm{x}|\mathcal{D})$, where $\mu_{\bm{\theta}}(\bm{x}|\mathcal{D})$ and $\sigma_{\boldsymbol{\theta}}(\bm{x}|\mathcal{D})$ are the posterior mean and posterior standard deviation respectively. Consider a Monte-Carlo estimation of
$\alpha^{\bm{\theta}}_{\text{rob.}}(\bm{x}|\mathcal{D}) \equiv \mathbb{E}_{\epsilon \sim \mathcal{N}(\bm{0}, \sigma_{\epsilon}^{2}\bm{I})}\left[\alpha^{\bm{\theta}+\epsilon}(\bm{x}|\mathcal{D})\right]$:
\begin{align*}
    \hat{\alpha}^{\bm{\theta}}(\bm{x}|\mathcal{D}) = \frac{1}{N_{\epsilon}}\sum_{j=1}^{N_{\epsilon}}\alpha^{\bm{\theta}+ \bm{\epsilon}_j}(\bm{x}|\mathcal{D}) 
\end{align*}
where $\bm{\epsilon}_j $ are i.i.d samples drawn from $ \mathcal{N}(\bm{0},\sigma^2_{\epsilon}\bm{I})$. Then, adding and subtracting $\hat{\alpha}^{\bm{\theta}}(\bm{x}|\mathcal{D})$ gives:
\begin{align*}
    &\left|\overline{\alpha}^{\bm{\theta}} (\bm{x}|\mathcal{D}) - \mathbb{E}_{\epsilon \sim \mathcal{N}(\bm{0}, \sigma_{\epsilon}^{2}\bm{I})}\left[\alpha^{\bm{\theta}+\epsilon}(\bm{x}|\mathcal{D})\right]\right| \le \\\nonumber
    &\left|\overline{\alpha}^{\bm{\theta}} (\bm{x}|\mathcal{D}) - \hat{\alpha}^{\bm{\theta}}(\bm{x}|\mathcal{D})\right| + \nonumber
    \left| \hat{\alpha}^{\bm{\theta}}(\bm{x}|\mathcal{D}) - \mathbb{E}_{\epsilon \sim \mathcal{N}(\bm{0}, \sigma_{\epsilon}^{2}\bm{I})}\left[\alpha^{\bm{\theta}+\epsilon}(\bm{x}|\mathcal{D})\right]\right|.
\end{align*}
Using the definition of $\hat{\alpha}^{\bm{\theta}}(\bm{x}|\mathcal{D})$ in the above result yields:
\begin{align}\label{Eq:result_one_overall}
    &\left|\overline{\alpha}^{\bm{\theta}} (\bm{x}|\mathcal{D}) - \mathbb{E}_{\epsilon \sim \mathcal{N}(\bm{0}, \sigma_{\epsilon}^{2}\bm{I})}\left[\alpha^{\bm{\theta}+\epsilon}(\bm{x}|\mathcal{D})\right]\right| \le \\\nonumber
    &\frac{1}{N_{\epsilon}}\sum_{j=1}^{N_{\epsilon}}\left|\overline{\alpha}^{\bm{\theta}} (\bm{x}|\mathcal{D}) - \alpha^{\bm{\theta}+\bm{\epsilon}_j}(\bm{x}|\mathcal{D})\right| + \nonumber
    \left|\frac{1}{N_{\epsilon}}\sum_{j=1}^{N_{\epsilon}}\alpha^{\bm{\theta}+\bm{\epsilon}_j}(\bm{x}|\mathcal{D}) -  \mathbb{E}_{\epsilon \sim \mathcal{N}(\bm{0}, \sigma_{\epsilon}^{2}\bm{I})}\left[\alpha^{\bm{\theta}+\epsilon}(\bm{x}|\mathcal{D})\right]\right|
\end{align}
Let us now study separately each term in the above result. Applying the Chebyshev inequality for the second term in the above expression, we have that with probability at least $p_1 = 1 - \frac{8\left[\mathbb{E}_{\bm{\epsilon}}\left[\mu^2_{\bm{\theta} + \bm{\epsilon}}(\bm{x}|\mathcal{D})\right] + \frac{\beta\pi}{2}\mathbb{E}_{\bm{\epsilon}}\left[\sigma^2_{\bm{\theta} + \bm{\epsilon}}(\bm{x}|\mathcal{D})\right]\right]}{N_{\epsilon}\rho^2}$:
\begin{align}\label{Eq:result_first_bound}
    &\left|\frac{1}{N_{\epsilon}}\sum_{j=1}^{N_{\epsilon}}\alpha^{\bm{\theta}+\bm{\epsilon}_j}(\bm{x}|\mathcal{D}) -  \mathbb{E}_{\epsilon \sim \mathcal{N}(\bm{0}, \sigma_{\epsilon}^{2}\bm{I})}\left[\alpha^{\bm{\theta}+\epsilon}(\bm{x}|\mathcal{D})\right]\right|\le \frac{\rho}{2}.
\end{align}
In order to ensure that $p_1 = 1 -  \frac{\delta}{2}$, the sample number $\bm{\epsilon}_j$ should be taken:
\begin{align*}
    N_{\epsilon} = \left\lceil\frac{16\left[\mathbb{E}_{\bm{\epsilon}}\left[\mu^2_{\bm{\theta} + \bm{\epsilon}}(\bm{x}|\mathcal{D})\right] + \frac{\beta\pi}{2}\mathbb{E}_{\bm{\epsilon}}\left[\sigma^2_{\bm{\theta} + \bm{\epsilon}}(\bm{x}|\mathcal{D})\right]\right]}{\delta\rho^2}\right\rceil.
\end{align*}
We will later simplify this expression using the bounds in (\ref{lemma_conditions}). For now, we restrict our focus on the second term in (\ref{Eq:result_one_overall}). To bound it, we will establish a bound on $|\overline{\alpha}^{\bm{\theta}} (\bm{x}|\mathcal{D}) - \alpha^{\bm{\theta}+\bm{\epsilon}_j}(\bm{x}|\mathcal{D})$. For a small random perturbation $\bm{\epsilon}_j$ we have (with probability 1):
\begin{align*}
    &\alpha^{\bm{\theta}+\bm{\epsilon}_j}(\bm{x}|\mathcal{D}) =  \alpha^{\bm{\theta}}(\bm{x}|\mathcal{D}) + \bm{\epsilon}^{\mathsf{T}}_j\nabla_{\bm{\theta}}\alpha^{\bm{\theta}}(\bm{x}|\mathcal{D}) + o(||\bm{\epsilon_j}||)=\\\nonumber
    &\alpha^{\bm{\theta}}(\bm{x}|\mathcal{D}) + \bm{\epsilon}^{\mathsf{T}}_j\nabla_{\bm{\theta}}\left[\mu_{\bm{\theta}}(\bm{x}|\mathcal{D}) + \sqrt{\frac{\beta\pi}{2}}\sigma_{\bm{\theta}}(\bm{x}|\mathcal{D})\right]+o(||\bm{\epsilon_j}||_2).
\end{align*}
Let us define 
\begin{equation*}
\bm{h}_{ \bm{\theta}}(\bm{x}|\mathcal{D}) = \nabla_{\bm{\theta}}\left[\mu_{\bm{\theta}}(\bm{x}|\mathcal{D}) + \sqrt{\frac{\beta\pi}{2}}\sigma_{\bm{\theta}}(\bm{x}|\mathcal{D})\right],    
\end{equation*}
then, using the Cauchy–Schwarz inequality we have:
\begin{align*}
    \left|\alpha^{\bm{\theta}+\bm{\epsilon}_j}(\bm{x}|\mathcal{D}) - \alpha^{\bm{\theta}}(\bm{x}|\mathcal{D})\right|&\le  ||\bm{\epsilon_j}||_2 ||[\bm{h}_{ \bm{\theta}}(\bm{x}|\mathcal{D})||_2 +o(1)]
\end{align*}
Since $\bm{\epsilon}_j\sim\mathcal{N}(0,1)$, then with probability at least $1 - \frac{\delta}{4N_{\epsilon}}$:
\begin{align*}
    ||\bm{\epsilon}_j||_2 \le 4\sigma_{\epsilon}\sqrt{p} + 2\sigma_{\epsilon}\sqrt{\log\frac{4N_{\epsilon}}{\delta}}
\end{align*}
Let us assume (and later we will prove the existence of such a bound) that $||\bm{h}_{ \bm{\theta}}(\bm{x}|\mathcal{D})||_2 \le A_1$. Then, with probability at least $1 - \frac{\delta}{4N_{\epsilon}}$:
\begin{align*}
    &\left|\alpha^{\bm{\theta}+\bm{\epsilon}_j}(\bm{x}|\mathcal{D}) - \alpha^{\bm{\theta}}(\bm{x}|\mathcal{D})\right|\le \nonumber
    \left[4\sigma_{\epsilon}\sqrt{p} + 2\sigma_{\epsilon}\sqrt{\log\frac{4N_{\epsilon}}{\delta}}\right]\left[A_1 + o(1)\right]
\end{align*}
On the other hand, for $\overline{\alpha}^{\bm{\theta}} (\bm{x}|\mathcal{D}) = \alpha^{\bm{\theta}} (\bm{x}|\mathcal{D}) + \eta \sigma_{\eta}$ with probability at least $1 - \frac{\delta}{4N_{\epsilon}}$ we have:
\begin{align*}
    &\left|\overline{\alpha}^{\bm{\theta}} (\bm{x}|\mathcal{D}) - \alpha^{\bm{\theta}} (\bm{x}|\mathcal{D})\right| \le \Phi^{-1}\left(1 - \frac{\delta}{8N_{\epsilon}}\right)\sigma_{n}.
\end{align*}
where $\Phi(\cdot)$ is the cumulative distribution function for a standard Gaussian random variable. Hence, by choosing $\sigma_{\bm{\epsilon}} = \min\left\{1, \frac{\Phi^{-1}\left(1 - \frac{\delta}{8N_{\epsilon}}\right)\sigma_{n}}{\left[4\sqrt{p} + 2\sqrt{\log\frac{4N_{\epsilon}}{\delta}}\right]\left[A_1 + o(1)\right]}\right\}$ 
with probability at least $1 - \frac{\delta}{2N_\epsilon}$ we have that both $\overline{\alpha}^{\bm{\theta}} (\bm{x}|\mathcal{D})$ and $\alpha^{\bm{\theta}+\bm{\epsilon}_j}(\bm{x}|\mathcal{D})$ belong to the interval centred at $\alpha^{\bm{\theta}}(\bm{x}|\mathcal{D})$ of size $\Phi^{-1}\left(1 - \frac{\delta}{8N_{\epsilon}}\right)\sigma_{n}$. Therefore, with probability at least $1 - \frac{\delta}{2N_{\epsilon}}$:
\begin{align*}
    \left|\overline{\alpha}^{\bm{\theta}} (\bm{x}|\mathcal{D}) - \alpha^{\bm{\theta}+\bm{\epsilon}_j}(\bm{x}|\mathcal{D})\right| \le 2\Phi^{-1}\left(1 - \frac{\delta}{8N_{\epsilon}}\right)\sigma_{n}
\end{align*}
Hence, by choosing $\sigma_n = \frac{\rho}{4\Phi^{-1}\left(1 - \frac{\delta}{8N_{\epsilon}}\right)}$ we arrive at:
\begin{align*}
    &\left|\overline{\alpha}^{\bm{\theta}} (\bm{x}|\mathcal{D}) - \alpha^{\bm{\theta}+\bm{\epsilon}_j}(\bm{x}|\mathcal{D})\right| \le \frac{\rho}{2}
\end{align*}
and, therefore, for the first term in (\ref{Eq:result_one_overall}) with probability at least $1 - \frac{\delta}{2}$ we have:
\begin{align*}
    &\frac{1}{N_{\epsilon}}\sum_{j=1}^{N_{\epsilon}}\left|\overline{\alpha}^{\bm{\theta}} (\bm{x}|\mathcal{D}) - \alpha^{\bm{\theta}+\bm{\epsilon}_j}(\bm{x}|\mathcal{D})\right| \le \frac{\rho}{2}
\end{align*}
Combining this result with (\ref{Eq:result_first_bound}) gives, that with probability at least $1 - \delta$:
\begin{align*}
     \left|\overline{\alpha}^{\bm{\theta}} (\bm{x}|\mathcal{D}) - \mathbb{E}_{\epsilon \sim \mathcal{N}(\bm{0}, \sigma_{\epsilon}^{2}\bm{I})}\left[\alpha^{\bm{\theta}+\epsilon}(\bm{x}|\mathcal{D})\right]\right| \le \rho, \ \ \ \ \ \ \ \ \forall\bm{\theta,x},
\end{align*}
upon defining:
\begin{align}
\label{Eq:parameter_setup}
    &\sigma_n = \frac{\rho}{4\Phi^{-1}\left(1 - \frac{\delta}{8N_{\epsilon}}\right)}, \ 
    \sigma_{\bm{\epsilon}} = \min\left\{1, \frac{\rho}{8\left[2\sqrt{p} + \sqrt{\log\frac{4N_{\epsilon}}{\delta}}\right]\left[A_1 + o(1)\right]}\right\},
\end{align}
with
\begin{align*}
    &N_{\epsilon} = \lceil\frac{16\left[\mathbb{E}_{\bm{\epsilon}}\left[\mu^2_{\bm{\theta} + \bm{\epsilon}}(\bm{x}|\mathcal{D})\right] + \frac{\beta\pi}{2}\mathbb{E}_{\bm{\epsilon}}\left[\sigma^2_{\bm{\theta} + \bm{\epsilon}}(\bm{x}|\mathcal{D})\right]\right]}{\delta\rho^2}\rceil.
\end{align*}
Our last step is to prove the existence of a constant $A_1$ such that $||\bm{h}_{\bm{\theta}}(\bm{x})||_2\le A_1$ and also to simplify these expressions by deriving bounds on  $\mathbb{E}_{\bm{\epsilon}}\left[\mu_{\bm{\theta}+\bm{\epsilon}}(\bm{x}|\mathcal{D})\right]$ and $\mathbb{E}_{\bm{\epsilon}}\left[\sigma^2_{\bm{\theta}+\bm{\epsilon}}(\bm{x}|\mathcal{D})\right]$. This will be provided as a separate Claim: \\

\begin{claim}
Let the bounds in (\ref{lemma_conditions}) hold, then there are positive constants $A_1,A_2$ and $A_3$, such that
\begin{equation}
\label{claim_results}
    ||\bm{h}_{\bm{\theta}}(\bm{x})||_2 \le A_1,\ \ 
    \mathbb{E}_{\bm{\epsilon}}\left[\mu_{\bm{\theta}+\bm{\epsilon}}(\bm{x}|\mathcal{D})\right] \le A_2, \ \ \ \mathbb{E}_{\bm{\epsilon}}\left[\sigma^2_{\bm{\theta}+\bm{\epsilon}}(\bm{x}|\mathcal{D})\right] \le A_3.
\end{equation}
\end{claim}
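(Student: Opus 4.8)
The plan is to treat the three bounds in (\ref{claim_results}) separately, exploiting the crucial fact that the hypotheses (\ref{lemma_conditions}) are postulated for \emph{every} parameter value $\bm{\theta}\in\mathbb{R}^p$. Consequently any deterministic bound I derive for a generic $\bm{\theta}$ holds verbatim for the perturbed parameter $\bm{\theta}+\bm{\epsilon}$, so each expectation $\mathbb{E}_{\bm{\epsilon}}[\cdot]$ is controlled automatically by the corresponding uniform bound — there is no need to integrate against the Gaussian density. Throughout I abbreviate $\bm{G}_{\bm{\theta}} = \bm{K}_{\bm{\theta}}+\sigma^2\bm{I}$ and lean on the single spectral estimate $\|\bm{G}_{\bm{\theta}}^{-1}\|_2 \le \sigma^{-2}$, which holds because $\bm{K}_{\bm{\theta}}\succeq 0$ forces $\lambda_{\min}(\bm{G}_{\bm{\theta}})\ge\sigma^2$; these combine with the entrywise estimates $\|\bm{k}_{\bm{\theta}}(\bm{x})\|_2\le M_1\sqrt{n}$ and $\|\bm{y}-m(\bm{x})\|_2\le (C+M_4)\sqrt{n}$ coming from $|k_{\bm{\theta}}|\le M_1$, $|y_i|\le C$, $|m(\bm{x})|\le M_4$.

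The two moment bounds are routine. For $A_2$ I would apply Cauchy--Schwarz to $\mu_{\bm{\theta}}(\bm{x}|\mathcal{D}) = m(\bm{x}) + \bm{k}_{\bm{\theta}}(\bm{x})^\top\bm{G}_{\bm{\theta}}^{-1}(\bm{y}-m(\bm{x}))$, giving $|\mu_{\bm{\theta}}(\bm{x}|\mathcal{D})| \le M_4 + nM_1(C+M_4)/\sigma^2$, independent of $\bm{\theta}$ and $\bm{x}$; this serves as $A_2$. For $A_3$ I would note that the subtracted quadratic form $\bm{k}_{\bm{\theta}}(\bm{x})^\top\bm{G}_{\bm{\theta}}^{-1}\bm{k}_{\bm{\theta}}(\bm{x})$ is nonnegative (as $\bm{G}_{\bm{\theta}}^{-1}\succeq 0$), so $0\le\sigma^2_{\bm{\theta}}(\bm{x}|\mathcal{D})\le k_{\bm{\theta}}(\bm{x},\bm{x})\le M_1 =: A_3$, i.e. the posterior variance never exceeds the prior variance.

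The substantive step is $A_1$. Writing $\bm{h}_{\bm{\theta}} = \nabla_{\bm{\theta}}\mu_{\bm{\theta}} + \sqrt{\beta\pi/2}\,\nabla_{\bm{\theta}}\sigma_{\bm{\theta}}$ and differentiating through the inverse with $\partial_{\theta_i}\bm{G}_{\bm{\theta}}^{-1} = -\bm{G}_{\bm{\theta}}^{-1}(\partial_{\theta_i}\bm{K}_{\bm{\theta}})\bm{G}_{\bm{\theta}}^{-1}$, every partial derivative of $\mu_{\bm{\theta}}$ and of $\sigma^2_{\bm{\theta}}$ becomes a product of factors each bounded by the constants above, using $\|\partial_{\theta_i}\bm{k}_{\bm{\theta}}(\bm{x})\|_2\le M_2\sqrt{n}$ and $\|\partial_{\theta_i}\bm{K}_{\bm{\theta}}\|_2\le\|\partial_{\theta_i}\bm{K}_{\bm{\theta}}\|_F\le M_2 n$ (both from $\|\nabla_{\bm{\theta}}k_{\bm{\theta}}\|_2\le M_2$). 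Summing over the $p$ coordinates yields uniform bounds on $\|\nabla_{\bm{\theta}}\mu_{\bm{\theta}}\|_2$ and $\|\nabla_{\bm{\theta}}\sigma^2_{\bm{\theta}}\|_2$ of the schematic form $\sqrt{p}\,\mathrm{poly}(M_1,M_2,C,M_4,n)/\sigma^4$. The one genuine obstacle is the chain rule $\nabla_{\bm{\theta}}\sigma_{\bm{\theta}} = \nabla_{\bm{\theta}}\sigma^2_{\bm{\theta}}/(2\sigma_{\bm{\theta}})$, whose factor $1/\sigma_{\bm{\theta}}$ is harmless only if the \emph{posterior} standard deviation is bounded away from zero.

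This is exactly where the otherwise-unused lower bound $|k_{\bm{\theta}}(\bm{x},\bm{x})|\ge M_0$ enters. I would secure a uniform positive floor on $\sigma^2_{\bm{\theta}}$ by diagonalising $\bm{K}_{\bm{\theta}} = \sum_i\lambda_i\bm{u}_i\bm{u}_i^\top$ and writing $\bm{k}_{\bm{\theta}}(\bm{x})^\top\bm{G}_{\bm{\theta}}^{-1}\bm{k}_{\bm{\theta}}(\bm{x}) = \sum_i c_i^2/(\lambda_i+\sigma^2)$ with $c_i = \bm{u}_i^\top\bm{k}_{\bm{\theta}}(\bm{x})$; a term-by-term comparison with the noiseless form, together with the prior-PSD inequality $\sum_i c_i^2/\lambda_i = \bm{k}_{\bm{\theta}}(\bm{x})^\top\bm{K}_{\bm{\theta}}^{+}\bm{k}_{\bm{\theta}}(\bm{x})\le k_{\bm{\theta}}(\bm{x},\bm{x})$ (valid since PSD-ness forces $\bm{k}_{\bm{\theta}}(\bm{x})\in\mathrm{range}(\bm{K}_{\bm{\theta}})$), gives
\begin{equation*}
\sigma^2_{\bm{\theta}}(\bm{x}|\mathcal{D}) \ge k_{\bm{\theta}}(\bm{x},\bm{x})\,\frac{\sigma^2}{\lambda_{\max}(\bm{K}_{\bm{\theta}})+\sigma^2} \ge \frac{M_0\,\sigma^2}{nM_1+\sigma^2} > 0,
\end{equation*}
where I also used $\lambda_{\max}(\bm{K}_{\bm{\theta}})\le\mathrm{tr}(\bm{K}_{\bm{\theta}})\le nM_1$. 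With the floor $\sigma_{\min} := \sqrt{M_0\sigma^2/(nM_1+\sigma^2)}$ in hand, $\|\nabla_{\bm{\theta}}\sigma_{\bm{\theta}}\|_2\le\|\nabla_{\bm{\theta}}\sigma^2_{\bm{\theta}}\|_2/(2\sigma_{\min})$ is finite, and the triangle inequality against the $\mu$-gradient bound delivers $A_1$. I expect this posterior-variance floor to be the only place requiring real care, as the remaining estimates are mechanical applications of Cauchy--Schwarz and the spectral bound on $\bm{G}_{\bm{\theta}}^{-1}$.
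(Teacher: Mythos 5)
Your proposal is correct, and in its mechanical parts (differentiating through $[\bm{K}_{\bm{\theta}}+\sigma^2\bm{I}]^{-1}$, Cauchy--Schwarz, the spectral bound $\|[\bm{K}_{\bm{\theta}}+\sigma^2\bm{I}]^{-1}\|_2\le\sigma^{-2}$, and the observation that uniform-in-$\bm{\theta}$ bounds make the expectations over $\bm{\epsilon}$ trivial) it follows the same route as the paper. Where you genuinely diverge --- and improve on the paper --- is the treatment of the factor $1/(2\sigma_{\bm{\theta}})$ arising from $\nabla_{\bm{\theta}}\sigma_{\bm{\theta}} = \nabla_{\bm{\theta}}\sigma^2_{\bm{\theta}}/(2\sigma_{\bm{\theta}})$. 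The paper disposes of it by asserting $|\sigma_{\bm{\theta}}(\bm{x}|\mathcal{D})|\ge k_{\bm{\theta}}(\bm{x},\bm{x})\ge M_0$, which only makes sense under the sign-flipped expression $\sigma^2_{\bm{\theta}}(\bm{x}|\mathcal{D}) = \bm{a}^{\mathsf{T}}_{\bm{\theta}}\bm{B}_{\bm{\theta}}\bm{a}_{\bm{\theta}} + k_{\bm{\theta}}$ written at the start of the paper's claim proof; that expression contradicts both the posterior covariance used in the main text and the quantity under the square root in the paper's own gradient computation, and for the correct posterior variance the asserted inequality fails (the posterior variance is at most the prior variance, not at least). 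Your Schur-complement/eigenvalue argument establishing the floor $\sigma^2_{\bm{\theta}}(\bm{x}|\mathcal{D})\ge M_0\sigma^2/(nM_1+\sigma^2)$ is the right way to close this gap: the range condition $\bm{k}_{\bm{\theta}}(\bm{x})\in\mathrm{range}(\bm{K}_{\bm{\theta}})$ and the bound $\bm{k}_{\bm{\theta}}(\bm{x})^{\top}\bm{K}_{\bm{\theta}}^{+}\bm{k}_{\bm{\theta}}(\bm{x})\le k_{\bm{\theta}}(\bm{x},\bm{x})$ both follow from positive semidefiniteness of the bordered kernel matrix, and monotonicity of $t\mapsto t/(t+\sigma^2)$ together with $\lambda_{\max}(\bm{K}_{\bm{\theta}})\le nM_1$ finish it. This is also why your $A_3=M_1$ (posterior variance bounded by prior variance) is cleaner than the paper's $M_1+M_1^2/\sigma_n^2$, which again relies on the sign-flipped formula. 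The only caveat is that your bounds on $\nabla_{\bm{\theta}}\mu_{\bm{\theta}}$ and $\nabla_{\bm{\theta}}\sigma^2_{\bm{\theta}}$ are left schematic, whereas the paper instantiates the constants explicitly; to make your argument self-contained you would need to carry out that bookkeeping, but nothing in it is in doubt.
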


\begin{proof}
We start with the bound on $||\bm{h}_{\bm{\theta}}(\bm{x})||_2$. Let us denote for simplicity $\bm{a}_{\bm{\theta}} = [k_{\bm{\theta}}(\bm{x},\bm{x}_i)]_{\bm{x}_i\in\mathcal{D}}$, $\bm{B}_{\bm{\theta}} = \left[\left[k_{\bm{\theta}}(\bm{x},\bm{x}^{\prime})\right]_{\bm{x}\in\mathcal{D},\bm{x}^{\prime}\in\mathcal{D}}+ \bm{I}\right]^{-1}$, $\bm{y} = [y(\bm{x})]_{\bm{x\in\mathcal{D}}}$,  $\bm{m}_{\mathcal{D}} = [m(\bm{x})]_{\bm{x}\in\mathcal{D}}$, $m = m(\bm{x})$, and $k_{\bm{\theta}} = k_{\bm{\theta}}(\bm{x},\bm{x})$, then
\begin{align*}
    &\mu_{\bm{\theta}}(\bm{x}|\mathcal{D}) = \bm{a}^{\mathsf{T}}_{\bm{\theta}}\bm{B}_{\bm{\theta}}[\bm{y} - \bm{m}_{\mathcal{D}}] + m,\ \ \nonumber
    \sigma^2_{\bm{\theta}}(\bm{x}|\mathcal{D}) = \bm{a}_{\bm{\theta}}^{\mathsf{T}}\bm{B}_{\bm{\theta}}\bm{a}_{\bm{\theta}} + k_{\bm{\theta}}
\end{align*}
Let us also denote the size of $\mathcal{D}$ as $N$, then we have:
\begin{align*}
    &\nabla_{\boldsymbol{\theta}}\mu_{\bm{\theta}}(\bm{x}|\mathcal{D}) = \sum_{i=1}^N\sum_{j=1}^N\nabla_{\bm{\theta}}\left[[y_j - m_j][\bm{a}_{\bm{\theta}}]_i[\boldsymbol{B}_{\bm{\theta}}]_{ij}\right] = \\\nonumber
    &\sum_{i=1}^N\sum_{j=1}^N\left[[y_j - m_j][\bm{B}_{\bm{\theta}}]_{ij}\nabla_{\bm{\theta}}[\bm{a}_{\bm{\theta}}]_i\right]+\nonumber
    \sum_{i=1}^N\sum_{j=1}^N\left[[y_j - m_j][\bm{a}_{\bm{\theta}}]_i\nabla_{\bm{\theta}}[[\bm{B}_{\bm{\theta}}]_{ij}]\right].
\end{align*}
Consider each term in this expression separately:
\begin{align*}
    &\left|\left|\sum_{i=1}^N\sum_{j=1}^N[y_j - m_j][\bm{B}_{\bm{\theta}}]_{ij}\nabla_{\bm{\theta}}[\bm{a}_{\bm{\theta}}]_i\right|\right|_2 = \nonumber
    \left|\left|\sum_{i=1}^N[\bm{B}_{\bm{\theta}}[\bm{y} - \bm{m}_{\mathcal{D}}]]_{i}\nabla_{\bm{\theta}}[\bm{a}_{\bm{\theta}}]_i\right|\right|_2 \le \\\nonumber
    &\sum_{i=1}^N||\bm{B}_{\bm{\theta}}(i,:)||_2||\bm{y} - \bm{m}_{\mathcal{D}}||_2\left|\left|\nabla_{\bm{\theta}}[\bm{a}_{\bm{\theta}}]_i\right|\right|_2.
\end{align*}
Using $|y|\le C$ and $|m(\boldsymbol{x})|\le M_4$, we have $||\bm{y} - \bm{m}_{\mathcal{D}}||_2 \le (C+M_4)\sqrt{N}$ and $||\nabla_{\boldsymbol{\theta}}[\boldsymbol{a}(\boldsymbol{\theta})]_i||_2\le M_2$ and as such:
\begin{align}
\label{part_prev_11}
    &\left|\left|\sum_{i=1}^N\sum_{j=1}^N[y_j - m_j][\bm{B}_{\bm{\theta}}]_{ij}\nabla_{\bm{\theta}}[\bm{a}_{\bm{\theta}}]_i\right|\right|_2 \le
    (C+M_4)\sqrt{N}\sum_{i=1}^N||\bm{B}(\bm{\theta})(i,:)||_2\left|\left|\nabla_{\bm{\theta}}[\bm{a}_{\bm{\theta}}]_i\right|\right|_2 \le\\\nonumber
    &(C+M_4)\sqrt{N}||\bm{B}_{\bm{\theta}}||_F\sum_{i=1}^N||\nabla_{\bm{\theta}}[\boldsymbol{a}_{\bm{\theta}}]_i||_2 \le \nonumber
    (C+M_4)\sqrt{N}\sqrt{\text{rank}(\bm{B}_{\bm{\theta}}}||\bm{B}_{\bm{\theta}}||_2\sum_{i=1}^N||\nabla_{\bm{\theta}}[\bm{a}_{\bm{\theta}}]_i||_2 \le \\\nonumber
    &\frac{(C+M_4)N}{\sigma^2_{n}}\sum_{i=1}^N||\nabla_{\boldsymbol{\theta}}[\boldsymbol{a}(\boldsymbol{\theta})]_i||_2 = \frac{(C+M_4)N^2M_2}{\sigma^2_{n}}
\end{align}
Now, let us consider the second term in the expression for the posterior mean:
\begin{align*}
    &\sum_{i=1}^N\sum_{j=1}^N[y_j - m_j][\bm{a}_{\bm{\theta}}]_i\nabla_{\bm{\theta}}\left[[\bm{B}_{\bm{\theta}}]_{ij}\right] = \nonumber
    \sum_{i=1}^N[\bm{a}_{\bm{\theta}}]_i\left[\sum_{j=1}[y_j - m_j]\nabla_{\boldsymbol{\theta}}\left[[\bm{B}_{\bm{\theta}}]_{ij}\right]\right].
\end{align*}
Notice, that the gradient expression above is presented in the form of a vector:
\begin{align*}
    \nabla_{\bm{\theta}}\left[[\bm{B}_{\bm{\theta}}]_{ij}\right]  = \left[\begin{array}{c}
      \frac{\partial}{\partial\theta_1}\left[\bm{K}_{\boldsymbol{\theta}} + \sigma_{n}\boldsymbol{I}\right]^{-1}_{ij}, \\
      \vdots \\
      \frac{\partial}{\partial\theta_p}\left[\bm{K}_{\boldsymbol{\theta}} + \sigma_{n}\boldsymbol{I}\right]^{-1}_{ij}\\ 
        \end{array},
    \right]
\end{align*}
where we use the notation $\bm{K}_{\bm{\theta}} = [k_{\bm{\theta}}(\bm{x}_i, \bm{x}_j)]^{N,N}_{i=1,j=1}$. For the $r^{th}$ component we have
\begin{equation}
\label{Eq:interm_result}
    \frac{\partial}{\partial\theta_r}\left[\bm{K}_{\boldsymbol{\theta}} + \sigma_{n}\boldsymbol{I}\right]^{-1}_{ij} = 
    \left[-\left[\bm{K}_{\boldsymbol{\theta}} + \sigma_{n}\boldsymbol{I}\right]^{-1}\frac{\partial}{\partial\theta_r}\left[\bm{K}_{\boldsymbol{\theta}} + \sigma_{n}\boldsymbol{I}\right]\left[\bm{K}_{\boldsymbol{\theta}} + \sigma_{n}\boldsymbol{I}\right]^{-1}\right]_{ij}
\end{equation}
Now we can study the gradient of the second term in the posterior mean expression,
\begin{align*}
    &\left|\left|\sum_{i=1}^N[\bm{a}_{\bm{\theta}}]_i\left[\sum_{j=1}[y_j - m_j]\nabla_{\bm{\theta}}\left[[\bm{B}_{\bm{\theta}}]_{ij}\right]\right]\right|\right|_2 \le \nonumber
    \sum_{i=1}^N\left|[\bm{a}_{\bm{\theta}}]_i\right|\left[\sum_{j=1}^N||\bm{y} - \bm{m}_{\mathcal{D}}||_2\left|\left|\nabla_{\bm{\theta}}\left[[\bm{B}_{\bm{\theta}}]_{ij}\right]\right|\right|_2\right] \le\\\nonumber
    &(C+M_4)\sqrt{N}M_1\sum_{i=1}^N\sum_{j=1}^N\sum_{r=1}^p\left|\frac{\partial}{\partial\theta_r}\left[\bm{K}_{\boldsymbol{\theta}} + \sigma_{n}\boldsymbol{I}\right]^{-1}_{ij}\right|.
\end{align*}
Using result (\ref{Eq:interm_result}) in the above expression we now have 
\begin{align*}
    &\left|\left|\sum_{i=1}^N[\bm{a}_{\bm{\theta}}]_i\left[\sum_{j=1}[y_j - m_j]\nabla_{\bm{\theta}}\left[[\bm{B}_{\bm{\theta}}]_{ij}\right]\right]\right|\right|_2 \le \\\nonumber
    &(C+M_4)\sqrt{N}M_1\sum_{i=1}^N\sum_{j=1}^N\sum_{r=1}^p\left|\frac{\partial}{\partial\theta_r}\left[\bm{K}_{\boldsymbol{\theta}} + \sigma_{n}\boldsymbol{I}\right]^{-1}_{ij}\right| \le \\\nonumber
    &(C+M_4)\sqrt{N}M_1\sum_{r=1}^p\sum_{i=1}^N\sum_{j=1}^N\left|\frac{\partial}{\partial\theta_r}\left[\bm{K}_{\boldsymbol{\theta}} + \sigma_{n}\boldsymbol{I}\right]^{-1}_{ij}\right|\le \\\nonumber
    &(C+M_4)N\sqrt{N}M_1\times
    \nonumber
    \sum_{r=1}^p\left|\left|\left[\bm{K}_{\boldsymbol{\theta}} + \sigma_{n}\boldsymbol{I}\right]^{-1}\frac{\partial}{\partial\theta_r}\left[\bm{K}_{\boldsymbol{\theta}} + \sigma_{n}\boldsymbol{I}\right]\left[\bm{K}_{\boldsymbol{\theta}} + \sigma_{n}\boldsymbol{I}\right]^{-1}\right|\right|_F,
\end{align*}
where we used that $\sum_{i=1}^N\sum_{j=1}^N|\boldsymbol{C}_{ij}|\le N||\boldsymbol{C}||_F$ for any arbitrary matrix $\boldsymbol{C}\in\mathbb{R}^{N\times N}$. Because $\frac{\partial}{\partial\theta_r}[\bm{K}_{\boldsymbol{\theta}} + \sigma_{n}\boldsymbol{I}] = \frac{\partial}{\partial\theta_r}\bm{K}_{\boldsymbol{\theta}}$
\begin{align*}
    &\frac{\left|\left|\sum_{i=1}^N[\bm{a}_{\bm{\theta}}]_i\left[\sum_{j=1}[y_j - m_j]\nabla_{\bm{\theta}}\left[[\bm{B}_{\bm{\theta}}]_{ij}\right]\right]\right|\right|_2}{(C+M_4)N\sqrt{N}M_1} \le \\\nonumber
    &\sum_{r=1}^p\left|\left|\left[\bm{K}_{\boldsymbol{\theta}} + \sigma_{n}\boldsymbol{I}\right]^{-1}\frac{\partial}{\partial\theta_r}\bm{K}_{\boldsymbol{\theta}}\left[\bm{K}_{\boldsymbol{\theta}} + \sigma_{n}\boldsymbol{I}\right]^{-1}\right|\right|_F \le \\\nonumber
    &\sqrt{N}\sum_{r=1}^p\left|\left|\left[\bm{K}_{\boldsymbol{\theta}} + \sigma_{n}\boldsymbol{I}\right]^{-1}\frac{\partial}{\partial\theta_r}\bm{K}_{\boldsymbol{\theta}}\left[\bm{K}_{\boldsymbol{\theta}} + \sigma_{n}\boldsymbol{I}\right]^{-1}\right|\right|_2
\end{align*}
and using the properties of the matrix 2-norm
$||\cdot||_2$
\begin{align*}
    \left|\left|\left[\bm{K}_{\boldsymbol{\theta}} + \sigma_{n}\boldsymbol{I}\right]^{-1}\right|\right|_2 \le \frac{1}{\sigma^2_n}
\end{align*}
Hence,
\begin{align}\label{Eq:intermd_result_two}
    &\frac{\left|\left|\sum_{i=1}^N[\bm{a}_{\bm{\theta}}]_i\left[\sum_{j=1}[y_j - m_j]\nabla_{\bm{\theta}}\left[[\bm{B}_{\bm{\theta}}]_{ij}\right]\right]\right|\right|_2}{(C+M_4)N^2M_1} \le\\\nonumber
    &\sum_{r=1}^p\left|\left|\left[\bm{K}_{\boldsymbol{\theta}} + \sigma_{n}\boldsymbol{I}\right]^{-1}\right|\right|_2\left|\left|\frac{\partial}{\partial\theta_r}\bm{K}_{\boldsymbol{\theta}}\right|\right|_2\left|\left| \left[\bm{K}_{\boldsymbol{\theta}} + \sigma_{n}\boldsymbol{I}\right]^{-1}\right|\right|_2\le \nonumber
    \frac{1}{\sigma^4_n}\sum_{r=1}^p\left|\left|\frac{\partial}{\partial\theta_r}\bm{K}_{\boldsymbol{\theta}}\right|\right|_2.
\end{align}
Let us study the last term in the expression above. Using $\sqrt{c^2_1 + \ldots + c^2_R} \le |c_1| + \ldots + |c_R|$ for any set of real numbers $c_1,\ldots,c_R\in\mathbb{R}$ we have
\begin{align*}
    &\left|\left|\frac{\partial}{\partial\theta_r}\bm{K}_{\boldsymbol{\theta}}\right|\right|_2 = \nonumber
    \left|\left|\left[\begin{array}{ccc}
    \frac{\partial}{\partial\theta_r}k_{\boldsymbol{\theta}}(\boldsymbol{x}_1, \boldsymbol{x}_1), & \ldots & \frac{\partial}{\partial\theta_r}k_{\boldsymbol{\theta}}(\boldsymbol{x}_1, \boldsymbol{x}_N)  \\
      \vdots & \vdots & \vdots \\
      \frac{\partial}{\partial\theta_r}k_{\boldsymbol{\theta}}(\boldsymbol{x}_N, \boldsymbol{x}_1), & \ldots & \frac{\partial}{\partial\theta_r}k_{\boldsymbol{\theta}}(\boldsymbol{x}_N, \boldsymbol{x}_N)  \\ 
        \end{array}l
    \right]\right|\right|_2 \le \\\nonumber
    &\left|\left|\left[\begin{array}{ccc}
      \frac{\partial}{\partial\theta_r}k_{\boldsymbol{\theta}}(\boldsymbol{x}_1, \boldsymbol{x}_1), & \ldots & \frac{\partial}{\partial\theta_r}k_{\boldsymbol{\theta}}(\boldsymbol{x}_1, \boldsymbol{x}_N)  \\
      \vdots & \vdots & \vdots \\
      \frac{\partial}{\partial\theta_r}k_{\boldsymbol{\theta}}(\boldsymbol{x}_N, \boldsymbol{x}_1), & \ldots & \frac{\partial}{\partial\theta_r}k_{\boldsymbol{\theta}}(\boldsymbol{x}_N, \boldsymbol{x}_N)  \\ 
        \end{array}
    \right]\right|\right|_F = \\\nonumber
    &\sqrt{\sum_{i=1}^N\sum_{j=1}^N\left[\frac{\partial}{\partial\theta_r}k_{\boldsymbol{\theta}}(\boldsymbol{x}_i, \boldsymbol{x}_j)\right]^2} \le \sum_{i=1}^N\sum_{j=1}^N\left|\frac{\partial}{\partial\theta_r}k_{\boldsymbol{\theta}}(\boldsymbol{x}_i, \boldsymbol{x}_j)\right|.
\end{align*}
Substituting this expression in (\ref{Eq:intermd_result_two}) gives us
\begin{align}
\label{Eq:part_12}
    &\frac{\left|\left|\sum_{i=1}^N[\bm{a}_{\bm{\theta}}]_i\left[\sum_{j=1}[y_j - m_j]\nabla_{\bm{\theta}}\left[[\bm{B}_{\bm{\theta}}]_{ij}\right]\right]\right|\right|_2}{(C+M_4)N^2M_1} \le 
    \frac{1}{\sigma^4_{n}}\sum_{r=1}^d\sum_{i=1}^N\sum_{j=1}^N\left|\frac{\partial}{\partial\theta_r}k_{\boldsymbol{\theta}}(\boldsymbol{x}_i, \boldsymbol{x}_j)\right|\le\\\nonumber
    &\frac{\sqrt{p}}{\sigma^4_{n}}\sum_{i=1}^N\sum_{j=1}^N\left|\left|\nabla_{\boldsymbol{\theta}}k_{\boldsymbol{\theta}}(\boldsymbol{x}_i, \boldsymbol{x}_j)\right|\right|_2 \le \frac{N^2\sqrt{p}M_2}{\sigma^4_{n}}.
\end{align}
Hence, combining results (\ref{part_prev_11}) and (\ref{Eq:part_12}) we have
\begin{equation}
\label{norm_posterior_mean_result}
    ||\nabla_{\boldsymbol{\theta}}\mu_{\bm{\theta}}(\bm{x}|\mathcal{D})||_2 \le \frac{(C+M_4)N^2M_2}{\sigma^2_n}\left[1 + \frac{N^2M_1\sqrt{p}}{\sigma^2_n}\right].
\end{equation}
Now, let us focus on the gradient of the posterior standard deviation,
\begin{align}\label{part_20}
    &\nabla_{\bm{\theta}}\sigma_{\bm{\theta}}(\bm{x}|\mathcal{D}) = \nonumber
    \nabla_{\bm{\theta}}\left[\sqrt{k_{\bm{\theta}}(\boldsymbol{x}, \bm{x}) - \bm{a}_{\bm{\theta}}^{\mathsf{T}} [\bm{K}_{\bm{\theta}} + \sigma^{2}_{n} \textbf{I}]^{-1} \bm{a}_{\bm{\theta}}}\right] = \\\nonumber
    &\frac{1}{2\sigma_{\bm{\theta}}(\bm{x}|\mathcal{D})}\nabla_{\bm{\theta}}\left[k_{\bm{\theta}}(\bm{x}, \boldsymbol{x}) - \bm{a}_{\bm{\theta}}^{\mathsf{T}} [\bm{K}_{\bm{\theta}} + \sigma^{2}_{n} \textbf{I}]^{-1} \bm{a}_{\bm{\theta}}\right] = \nonumber
    \frac{1}{2\sigma_{\bm{\theta}}(\bm{x})}\left[\nabla_{\bm{\theta}}k_{\bm{\theta}}(\bm{x}, \bm{x}) - \nabla_{\bm{\theta}}\left[\bm{a}_{\bm{\theta}}^{\mathsf{T}} [\bm{K}_{\bm{\theta}} + \sigma^{2}_{n} \textbf{I}]^{-1} \bm{a}_{\bm{\theta}}\right]\right].
\end{align}
Let us study the second gradient expression. Using our notation we have
\begin{align*}
    &\bm{a}_{\bm{\theta}}^{\mathsf{T}} [\bm{K}_{\bm{\theta}} + \sigma^{2}_{n} \textbf{I}]^{-1} \bm{a}_{\bm{\theta}} = \nonumber
    \bm{a}^{\mathsf{T}}_{\bm{\theta}}\bm{B}_{\bm{\theta}}\bm{a}_{\bm{\theta}} =  \sum_{i=1}^N\sum_{j=1}^N[\bm{a}_{\bm{\theta}}]_i[\bm{a}_{\bm{\theta}}]_j\left[\bm{B}_{\bm{\theta}}\right]_{ij}.
\end{align*}
Hence, for the gradient,
\begin{align*}
    &\nabla_{\bm{\theta}}\left[\bm{a}^{\mathsf{T}}_{\bm{\theta}}\bm{B}_{\bm{\theta}}\bm{a}_{\bm{\theta}}\right] = \sum_{i=1}^N\sum_{j=1}^N\nabla_{\bm{\theta}}\left[[\bm{a}_{\bm{\theta}}]_i[\bm{a}_{\bm{\theta}}]_j\left[\bm{B}_{\bm{\theta}}\right]_{ij}\right] = \\\nonumber
    &\sum_{i=1}^N\sum_{j=1}^N\nabla_{\bm{\theta}}\left[[\bm{a}_{\bm{\theta}}]_i\right][\bm{a}_{\bm{\theta}}]_j\left[\bm{B}_{\bm{\theta}}\right]_{ij} + \nonumber
    \sum_{i=1}^N\sum_{j=1}^N\nabla_{\bm{\theta}}\left[[\bm{a}_{\bm{\theta}}]_j\right][\bm{a}_{\bm{\theta}}]_i\left[\bm{B}_{\bm{\theta}}\right]_{ij} + \\\nonumber
    &\sum_{i=1}^N\sum_{j=1}^N\nabla_{\bm{\theta}}\left[\left[\bm{B}_{\bm{\theta}}\right]_{ij}\right][\bm{a}_{\bm{\theta}}]_i[\bm{a}_{\bm{\theta}}]_j.
\end{align*}
and for the norm of the above expression we have
\begin{align*}
    &\left|\left|\nabla_{\bm{\theta}}\left[\bm{a}^{\mathsf{T}}_{\bm{\theta}}\bm{B}_{\bm{\theta}}\bm{a}_{\bm{\theta}}\right]\right|\right|_2 = \sum_{i=1}^N\sum_{j=1}^N\nabla_{\bm{\theta}}\left[[\bm{a}_{\bm{\theta}}]_i[\bm{a}_{\bm{\theta}}]_j\left[\bm{B}_{\bm{\theta}}\right]_{ij}\right] = \\\nonumber
    &\sum_{i=1}^N\sum_{j=1}^N\left|\left|\nabla_{\bm{\theta}}\left[\left[\bm{B}_{\bm{\theta}}\right]_{ij}\right]\right|\right|_2\left|[\bm{a}_{\bm{\theta}}]_i[\bm{a}_{\bm{\theta}}]_j\right| + \nonumber
    \sum_{i=1}^N\sum_{j=1}^N\left|\left|\nabla_{\bm{\theta}}\right|\right|_2\left|\left[[\bm{a}_{\bm{\theta}}]_i\right][\bm{a}_{\bm{\theta}}]_j\left[\bm{B}_{\bm{\theta}}\right]_{ij}\right| + \\\nonumber
    &\sum_{i=1}^N\sum_{j=1}^N\left|\left|\nabla_{\bm{\theta}}\left[[\bm{a}_{\bm{\theta}}]_j\right]\right|\right|_2\left|[\bm{a}_{\bm{\theta}}]_i\left[\bm{B}_{\bm{\theta}}\right]_{ij}\right|.
\end{align*}
Let us now bound each term in this expression:
\begin{enumerate}
    \item The first term:
    \begin{align*}
        &\sum_{i=1}^N\sum_{j=1}^N\left|\left|\nabla_{\bm{\theta}}\left[\left[\bm{B}_{\bm{\theta}}\right]_{ij}\right]\right|\right|_2\left|[\bm{a}_{\bm{\theta}}]_i[\bm{a}_{\bm{\theta}}]_j\right| \le \\\nonumber
        &\sum_{i=1}^N\sum_{j=1}^N\left|\left|\nabla_{\bm{\theta}}\left[\left[\bm{B}_{\bm{\theta}}\right]_{ij}\right]\right|\right|_2\left|\left|\bm{a}_{\bm{\theta}}\right|\right|_2\left|\left|\bm{a}_{\bm{\theta}}\right|\right|_2\le\nonumber
        M^2_1\sum_{i=1}^N\sum_{j=1}^N\left|\left|\nabla_{\bm{\theta}}\left[\left[\bm{B}_{\bm{\theta}}\right]_{ij}\right]\right|\right|_2
    \end{align*}
    Using the previous bound for $\left|\left|\nabla_{\bm{\theta}}\left[\left[\bm{B}_{\bm{\theta}}\right]_{ij}\right]\right|\right|_2$ we have:
    \begin{align*}
        &\sum_{i=1}^N\sum_{j=1}^N\left|\left|\nabla_{\bm{\theta}}\left[\left[\bm{B}_{\bm{\theta}}\right]_{ij}\right]\right|\right|_2\left|[\bm{a}_{\bm{\theta}}]_i[\bm{a}_{\bm{\theta}}]_j\right| \le\nonumber
        M^2_1\sum_{i=1}^N\sum_{j=1}^N\sum_{r=1}^p\left|\frac{\partial}{\partial\theta_r}\left[\bm{K}_{\bm{\theta}} + \sigma^{2}_{n} \textbf{I}\right]^{-1}_{ij}\right| = \\\nonumber
        &NM^2_1\sum_{r=1}^p\left|\left|\left[\bm{K}_{\bm{\theta}} + \sigma_{n}\bm{I}\right]^{-1}\frac{\partial}{\partial\theta_r}\bm{K}_{\bm{\theta}}\left[\bm{K}_{\bm{\theta}} + \sigma_{n}\bm{I}\right]^{-1}\right|\right|_F \le\\\nonumber
        &N^{\frac{3}{2}}M^2_1\sum_{r=1}^p\left|\left|\left[\bm{K}_{\bm{\theta}} + \sigma_{n}\bm{I}\right]^{-1}\frac{\partial}{\partial\theta_r}\bm{K}_{\boldsymbol{\theta}}\left[\bm{K}_{\bm{\theta}} + \sigma_{n}\bm{I}\right]^{-1}\right|\right|_2
    \end{align*}
    Since $\left|\left|\left[\bm{K}_{\boldsymbol{\theta}} + \sigma_{n}\boldsymbol{I}\right]^{-1}\right|\right|_2 \le\frac{1}{\sigma^2_n}$ we have:
    \begin{align*}
        &\sum_{i=1}^N\sum_{j=1}^N\left|\left|\nabla_{\bm{\theta}}\left[\left[\bm{B}_{\bm{\theta}}\right]_{ij}\right]\right|\right|_2\left|[\bm{a}_{\bm{\theta}}]_i[\bm{a}_{\bm{\theta}}]_j\right| \le \nonumber
        \frac{N\sqrt{N}M^2_1}{\sigma^4_n}\sum_{r=1}^p\sum_{i=1}^N\sum_{j=1}^N\left|\frac{\partial}{\partial\theta_r}k_{\boldsymbol{\theta}}(\boldsymbol{x}_i,\boldsymbol{x}_j)\right|.
    \end{align*}
    Using $\sum_{r=1}^p\sum_{i=1}^N\sum_{j=1}^N\left|\frac{\partial}{\partial\theta_r}k_{\bm{\theta}}(\bm{x}_i,\bm{x}_j)\right| = \sqrt{p}\sum_{i=1}^N\sum_{j=1}^N\left|\left|\nabla_{\boldsymbol{\theta}}k_{\boldsymbol{\theta}}(\boldsymbol{x}_i,\boldsymbol{x}_j)\right|\right|_2 \le N^2\sqrt{p}M_2$, we have:
    \begin{align*}
        &\sum_{i=1}^N\sum_{j=1}^N\left|\left|\nabla_{\bm{\theta}}\left[\left[\bm{B}_{\bm{\theta}}\right]_{ij}\right]\right|\right|_2\left|[\bm{a}_{\bm{\theta}}]_i[\bm{a}_{\bm{\theta}}]_j\right| \le \frac{N^{\frac{7}{2}}\sqrt{p}M^2_1M_2}{\sigma^4_n}
    \end{align*}
    
    \item The second and the third terms are identical with respect to the bounding strategy,
    \begin{align*}
        &\sum_{i=1}^N\sum_{j=1}^N||\nabla_{\bm{\theta}}\left[[\bm{a}_{\bm{\theta}}]_i\right]||_2\left|[\bm{a}_{\bm{\theta}}]_j\left[\bm{B}_{\bm{\theta}}\right]_{ij}\right| = \nonumber
        \sum_{i=1}^N\left|\bm{B}_{\bm{\theta}}(i,:)\bm{a}_{\bm{\theta}}\right|\left|\left|\nabla_{\bm{\theta}}\left[[\bm{a}_{\bm{\theta}}]_i\right]\right|\right|_2 \le\\\nonumber
        &\sum_{i=1}^N\left|\left|\bm{B}_{\bm{\theta}}(i,:)\right|\right|_2\left|\left|\bm{a}_{\bm{\theta}}\right|\right|_2\left|\left|\nabla_{\bm{\theta}}\left[[\bm{a}_{\bm{\theta}}]_i\right]\right|\right|_2 \le \nonumber
        ||\bm{B}_{\bm{\theta}}||_F||\bm{a}_{\bm{\theta}}||_2\sum_{i=1}^N\left|\left|\nabla_{\bm{\theta}}\left[[\bm{a}_{\bm{\theta}}]_i\right]\right|\right|_2,
    \end{align*}
    since $||\bm{B}_{\bm{\theta}}||_F \le \sqrt{\text{rank}(\bm{B}_{\bm{\theta}})}||\bm{B}_{\bm{\theta}}||_2 \le \frac{\sqrt{N}}{\sigma^2_{n}}$. Hence,
    \begin{align*}
        &\sum_{i=1}^N\sum_{j=1}^N||\nabla_{\bm{\theta}}\left[[\bm{a}_{\bm{\theta}}]_i\right]||_2\left|[\bm{a}_{\bm{\theta}}]_j\left[\bm{B}_{\bm{\theta}}\right]_{ij}\right|\le \frac{N\sqrt{N}M_1M_2}{\sigma^2_n}.
    \end{align*}
\end{enumerate}
Combining these results and using $||\nabla_{\bm{\theta}}k_{\bm{\theta}}(\bm{x},\bm{x})|| \le M_2$,  $\left|\sigma_{\bm{\theta}}(\bm{x}|\mathcal{D})\right| \ge k_{\bm{\theta}}(\bm{x},\bm{x}) \ge M_0$, we have
\begin{equation}
\label{norm_posterior_deviat_result}
    \left|\left|\nabla_{\bm{\theta}}\left[\sigma_{\bm{\theta}}(\bm{x}|\mathcal{D})\right]\right|\right|_2 \le \frac{N\sqrt{N}M_1M_2}{2\sigma^2_nM_0}\left[\frac{N^2\sqrt{p}M_1}{\sigma^2_n}+2\right]
\end{equation}
Hence, combining (\ref{norm_posterior_mean_result}) and (\ref{norm_posterior_deviat_result})  we have
\begin{align*}
    &||\bm{h}_{\bm{\theta}}(\bm{x}|\mathcal{D})||_2\le \nonumber
    ||\nabla_{\boldsymbol{\theta}}\mu_{\bm{\theta}}(\bm{x}|\mathcal{D})||_2 + \sqrt{\frac{\beta\pi}{2}}\left|\left|\nabla_{\bm{\theta}}\left[\sigma_{\bm{\theta}}(\bm{x}|\mathcal{D})\right]\right|\right|_2 \le\\\nonumber
    &\frac{(C+M_4)N^2M_2}{\sigma^2_n}\left[1 + \frac{N^2M_1\sqrt{p}}{\sigma^2_n}\right] + \nonumber
    \sqrt{\frac{\beta\pi}{2}}\frac{N\sqrt{N}M_1M_2}{2\sigma^2_nM_0}\left[\frac{N^2\sqrt{p}M_1}{\sigma^2_n}+2\right] \triangleq A_1.
\end{align*}
Now, we are ready to bound the other two terms in the claim:
\begin{align*}
    &\mu^2_{\bm{\theta}+\bm{\epsilon}}(\bm{x}|\mathcal{D}) \le 2\left[\bm{a}^{\mathsf{T}}_{\bm{\theta}+\bm{\epsilon}}\bm{B}_{\bm{\theta}+\bm{\epsilon}}[\bm{y} - \bm{m}_{\mathcal{D}}]\right]^2 + 2|m|^2 \le \nonumber
    2\frac{(C + M_4)^2M^2_1}{\sigma^4_n} + 2M^2_4
\end{align*}
Therefore, for $\mathbb{E}_{\bm{\epsilon}}\left[\mu^2_{\bm{\theta}+\bm{\epsilon}}(\bm{x}|\mathcal{D})\right]$ we have
\begin{align*}
    &\mathbb{E}_{\bm{\epsilon}}\left[\mu^2_{\bm{\theta}+\bm{\epsilon}}(\bm{x}|\mathcal{D})\right] \le 2\frac{(C + M_4)^2M^2_1}{\sigma^4_n} + 2M^2_4 \triangleq A_2.
\end{align*}
Finally, for the posterior mean
\begin{align*}
    &\sigma^2_{\bm{\theta}+\bm{\epsilon}}(\bm{x}|\mathcal{D}) \le k_{\bm{\theta}}(\bm{x},\bm{x}) + \bm{a}^{\mathsf{T}}_{\bm{\theta}+\bm{\epsilon}}\bm{B}_{\bm{\theta}+\bm{\epsilon}}\bm{a}_{\bm{\theta}+\bm{\epsilon}} \le \nonumber
    M_1 + \frac{M^2_1}{\sigma^2_n}
\end{align*}
Therefore, for $\mathbb{E}_{\bm{\epsilon}}\left[\sigma^2_{\bm{\theta}+\bm{\epsilon}}(\bm{x}|\mathcal{D})\right]$ we have
\begin{align*}
    &\mathbb{E}_{\bm{\epsilon}}\left[\sigma^2_{\bm{\theta}+\bm{\epsilon}}(\bm{x}|\mathcal{D})\right] \le M_1 + \frac{M^2_1}{\sigma^2_n} \triangleq A_3.
\end{align*}
This finishes the proof of the claim.
\end{proof}

\noindent Equipped with these results, we can further simplify the expressions (\ref{Eq:parameter_setup}):
\begin{align*}
    &\sigma_n = \frac{\rho}{4\Phi^{-1}\left(1 - \frac{\delta}{8N_{\epsilon}}\right)},\ \ \nonumber
    \sigma_{\bm{\epsilon}} = \min\left\{1, \frac{\rho}{8\left[2\sqrt{p} + \sqrt{\log\frac{4N_{\epsilon}}{\delta}}\right]\left[A_1 + o(1)\right]}\right\},
\end{align*}
with
\begin{align*}
    &N_{\epsilon} = \left\lceil\frac{16\left[A_2 + \frac{\beta\pi}{2}A_3\right]}{\delta\rho^2}\right\rceil.
\end{align*}
This finishes the proof of the lemma. \\

As such, we may now implement robust formulations of acquisition functions using only the GP predictive mean and variance.

\section{Statistical Hypothesis Tests for Heteroscedasticity}\label{sec:hyp_test_app}

In this section we present the full results for the statistical hypothesis testing using Levene's test and the Fligner-Killeen test in \autoref{tab:search-spacehetero-tests-boston}, \autoref{tab:search-spacehetero-tests-Breast cancer dataset}, \autoref{tab:search-spacehetero-tests-diabetes}, \autoref{tab:search-spacehetero-tests-digits}, \autoref{tab:search-spacehetero-tests-iris} and \autoref{tab:search-spacehetero-tests-wine} for the Boston, breast cancer, diabetes, digits, iris and wine datasets respectively.

\begin{table*}
\centering
\caption{Heteroscedasticity tests on tasks involving \texttt{Boston} data set.}
\label{tab:search-spacehetero-tests-boston}
\begin{tabular}{lllrrrr}
\toprule
Data set & Model & Metric & Fligner Statistic & p-value & Levene Statistic & p-value \\ 
\midrule
\textbf{Boston} & DT & mae & 73.51 & \textbf{0.01327} & 1.752 & \textbf{1.900e-03} \\ 
  & MLP-adam & mae & 336.3 & \textbf{1.737e-44} & 14.4 & \textbf{3.611e-65} \\ 
  & MLP-SGD & mae & 272.6 & \textbf{8.694e-33} & 6.561 & \textbf{1.480e-29} \\ 
  & RF & mae & 28.79 & 0.9906 & 0.6768 & 0.9537 \\ 
  & SVM & mae & 48.08 & 0.5106 & 0.9612 & 0.5508 \\ 
  & ada & mae & 218.7 & \textbf{2.692e-23} & 13.59 & \textbf{5.542e-62} \\ 
  & kNN & mae & 33.15 & 0.9597 & 0.619 & 0.98 \\ 
  & lasso & mae & 30.4 & 0.983 & 0.6091 & 0.983 \\ 
  & linear & mae & 16.17 & 1 & 0.251 & 1 \\ 
  & DT & mse & 60.75 & 0.1211 & 1.33 & 0.07387 \\ 
  & MLP-adam & mse & 387 & \textbf{4.504e-54} & 15.32 & \textbf{1.147e-68} \\ 
  & MLP-SGD & mse & 353.2 & \textbf{1.185e-47} & 8.239 & \textbf{3.548e-38} \\ 
  & RF & mse & 35.59 & 0.9242 & 0.8985 & 0.6692 \\ 
  & SVM & mse & 25.01 & 0.9983 & 0.4491 & 0.9996 \\ 
  & ada & mse & 249.1 & \textbf{1.398e-28} & 14.4 & \textbf{3.682e-65} \\ 
  & kNN & mse & 27.75 & 0.9938 & 0.8247 & 0.7951 \\ 
  & lasso & mse & 31.38 & 0.9764 & 0.5397 & 0.9955 \\ 
  & linear & mse & 16.67 & 1 & 0.1726 & 1 \\ 
\bottomrule
\end{tabular}
\end{table*}

\begin{table*}
\centering
\caption{Heteroscedasticity tests on tasks involving \texttt{Breast cancer} (BC) data set.}
\label{tab:search-spacehetero-tests-Breast cancer dataset}
\begin{tabular}{lllrrrr}
\toprule
Data set & Model & Metric & Fligner Statistic & p-value &  Levene Statistic & p-value \\ 
\midrule
\textbf{BC} & DT & acc & 97.79 & \textbf{4.302e-05} & 4.62 & \textbf{6.650e-19} \\ 
  & MLP-adam & acc & 133 & \textbf{1.113e-09} & 2.939 & \textbf{1.923e-09} \\ 
  & MLP-SGD & acc & 116.8 & \textbf{1.854e-07} & 2.469 & \textbf{6.495e-07} \\ 
  & RF & acc & 154.9 & \textbf{6.469e-13} & 6.661 & \textbf{4.353e-30} \\ 
  & SVM & acc & 20.7 & 0.9999 & 0.3995 & 0.9999 \\ 
  & ada & acc & 272.5 & \textbf{9.178e-33} & 13.57 & \textbf{6.582e-62} \\ 
  & kNN & acc & 33.16 & 0.9596 & 0.5519 & 0.9941 \\ 
  & lasso & acc & 20.78 & 0.9999 & 0.4291 & 0.9998 \\ 
  & linear & acc & 21.15 & 0.9998 & 0.4545 & 0.9995 \\ 
  & DT & nll & 260.5 & \textbf{1.280e-30} & 9.52 & \textbf{2.294e-44} \\ 
  & MLP-adam & nll & 166.6 & \textbf{1.008e-14} & 3.643 & \textbf{2.247e-13} \\ 
  & MLP-SGD & nll & 141.2 & \textbf{7.115e-11} & 2.669 & \textbf{5.661e-08} \\ 
  & RF & nll & 185.8 & \textbf{8.495e-18} & 7.553 & \textbf{1.013e-34} \\ 
  & SVM & nll & 76.98 & \textbf{6.526e-03} & 1.707 & \textbf{2.970e-03} \\ 
  & ada & nll & 142 & \textbf{5.458e-11} & 4.283 & \textbf{5.274e-17} \\ 
  & kNN & nll & 125.7 & \textbf{1.155e-08} & 4.337 & \textbf{2.635e-17} \\ 
  & lasso & nll & 71.41 & \textbf{0.02} & 1.011 & 0.4565 \\ 
  & linear & nll & 18.55 & 1 & 0.2714 & 1 \\ 
\bottomrule
\end{tabular}
\end{table*}

\begin{table*}
\centering
\caption{Heteroscedasticity tests on tasks involving \texttt{diabetes} data set.}
\label{tab:search-spacehetero-tests-diabetes}
\begin{tabular}{lllrrrr}
\toprule
Dataset & Model & Metric & Fligner Statistic & p-value & Levene Statistic & p-value \\ 
\midrule
\textbf{Diabetes} & DT & mae & 56.52 & 0.2146 & 1.131 & 0.2601 \\ 
  & MLP-adam & mae & 74.64 & \textbf{0.01059} & 2.573 & \textbf{1.847e-07} \\ 
  & MLP-SGD & mae & 191.3 & \textbf{1.062e-18} & 17.87 & \textbf{8.498e-78} \\ 
  & RF & mae & 79.38 & \textbf{3.898e-03} & 1.558 & \textbf{0.01174} \\ 
  & SVM & mae & 2.436 & 1 & 1.810e-04 & 1 \\ 
  & ada & mae & 179.8 & \textbf{7.883e-17} & 7.542 & \textbf{1.154e-34} \\ 
  & kNN & mae & 67.48 & \textbf{0.04106} & 2.101 & \textbf{4.747e-05} \\ 
  & lasso & mae & 176.2 & \textbf{2.950e-16} & 4.75 & \textbf{1.225e-19} \\ 
  & linear & mae & 206 & \textbf{3.792e-21} & 5.714 & \textbf{5.490e-25} \\ 
  & DT & mse & 44.52 & 0.6551 & 0.8264 & 0.7925 \\ 
  & MLP-adam & mse & 100.4 & \textbf{2.109e-05} & 3.582 & \textbf{4.951e-13} \\ 
  & MLP-SGD & mse & 202.9 & \textbf{1.257e-20} & 14.31 & \textbf{7.960e-65} \\ 
  & RF & mse & 37.1 & 0.8938 & 0.8063 & 0.8224 \\ 
  & SVM & mse & 4.004 & 1 & 4.740e-04 & 1 \\ 
  & ada & mse & 189 & \textbf{2.510e-18} & 7.348 & \textbf{1.138e-33} \\ 
  & kNN & mse & 88.62 & \textbf{4.545e-04} & 2.964 & \textbf{1.407e-09} \\ 
  & lasso & mse & 257.6 & \textbf{4.341e-30} & 10.86 & \textbf{1.637e-50} \\ 
  & linear & mse & 278.2 & \textbf{8.540e-34} & 10.01 & \textbf{1.216e-46} \\ 
\bottomrule
\end{tabular}
\end{table*}

\begin{table*}
\centering
\caption{Heteroscedasticity tests on tasks involving \texttt{digits} data set.}
\label{tab:search-spacehetero-tests-digits}
\begin{tabular}{lllrrrr}
\toprule
Data set & Model & Metric & Fligner Statistic & p-value & Levene Statistic & p-value \\ 
\midrule
\textbf{Digits} & DT & acc & 205 & \textbf{5.670e-21} & 14.29 & \textbf{9.219e-65} \\ 
  & MLP-adam & acc & 256.7 & \textbf{6.239e-30} & 7.342 & \textbf{1.219e-33} \\ 
  & MLP-SGD & acc & 210 & \textbf{8.188e-22} & 6.53 & \textbf{2.167e-29} \\ 
  & RF & acc & 184.3 & \textbf{1.458e-17} & 15.61 & \textbf{9.379e-70} \\ 
  & SVM & acc & 91.72 & \textbf{2.093e-04} & 2.187 & \textbf{1.790e-05} \\ 
  & ada & acc & 99.34 & \textbf{2.832e-05} & 2.305 & \textbf{4.601e-06} \\ 
  & kNN & acc & 35 & 0.9343 & 0.7042 & 0.9349 \\ 
  & lasso & acc & 22.97 & 0.9994 & 0.4292 & 0.9998 \\ 
  & linear & acc & 17.3 & 1 & 0.2963 & 1 \\ 
  & DT & nll & 249.6 & \textbf{1.140e-28} & 15.71 & \textbf{3.892e-70} \\ 
  & MLP-adam & nll & 339.8 & \textbf{3.816e-45} & 6.882 & \textbf{3.012e-31} \\ 
  & MLP-SGD & nll & 244.8 & \textbf{7.740e-28} & 6.104 & \textbf{4.129e-27} \\ 
  & RF & nll & 144 & \textbf{2.791e-11} & 7.435 & \textbf{4.059e-34} \\ 
  & SVM & nll & 4.373 & 1 & 0.06091 & 1 \\ 
  & ada & nll & 135.1 & \textbf{5.444e-10} & 3.294 & \textbf{2.061e-11} \\ 
  & kNN & nll & 108.2 & \textbf{2.326e-06} & 3.059 & \textbf{4.211e-10} \\ 
  & lasso & nll & 88.4 & \textbf{4.799e-04} & 2.116 & \textbf{3.995e-05} \\ 
  & linear & nll & 103 & \textbf{1.024e-05} & 3.328 & \textbf{1.335e-11} \\ 
\bottomrule
\end{tabular}
\end{table*}

\begin{table*}
\centering
\caption{Heteroscedasticity tests on tasks involving \texttt{iris} data set.}
\label{tab:search-spacehetero-tests-iris}
\begin{tabular}{lllrrrr}
\toprule
Data set & Model & Metric & Fligner Statistic & p-value & Levene Statistic & p-value \\ 
\midrule
\textbf{Iris} & DT & acc & 207.1 & \textbf{2.440e-21} & 6.523 & \textbf{2.355e-29} \\ 
  & MLP-adam & acc & 83.81 & \textbf{1.436e-03} & 1.838 & \textbf{7.989e-04} \\ 
  & MLP-SGD & acc & 68.52 & \textbf{0.03413} & 1.409 & \textbf{0.04082} \\ 
  & RF & acc & 155.5 & \textbf{5.311e-13} & 6.138 & \textbf{2.726e-27} \\ 
  & SVM & acc & 198.4 & \textbf{6.990e-20} & 3.345 & \textbf{1.065e-11} \\ 
  & ada & acc & 155.7 & \textbf{4.788e-13} & 5.018 & \textbf{3.858e-21} \\ 
  & kNN & acc & 55.68 & 0.2378 & 1.124 & 0.2701 \\ 
  & lasso & acc & 19.72 & 0.9999 & 0.4045 & 0.9999 \\ 
  & linear & acc & 106.4 & \textbf{3.965e-06} & 2.959 & \textbf{1.502e-09} \\ 
  & DT & nll & 322.2 & \textbf{7.375e-42} & 6.118 & \textbf{3.506e-27} \\ 
  & MLP-adam & nll & 106.3 & \textbf{4.070e-06} & 3.123 & \textbf{1.869e-10} \\ 
  & MLP-SGD & nll & 155.6 & \textbf{4.966e-13} & 6.386 & \textbf{1.264e-28} \\ 
  & RF & nll & 321.3 & \textbf{1.066e-41} & 8.339 & \textbf{1.136e-38} \\ 
  & SVM & nll & 188.4 & \textbf{3.217e-18} & 4.736 & \textbf{1.470e-19} \\ 
  & ada & nll & 74.04 & \textbf{0.01194} & 1.414 & \textbf{0.03938} \\ 
  & kNN & nll & 212.6 & \textbf{2.863e-22} & 8.838 & \textbf{4.118e-41} \\ 
  & lasso & nll & 45.45 & 0.6177 & 0.5045 & 0.998 \\ 
  & linear & nll & 36.64 & 0.9037 & 0.733 & 0.9101 \\ 
\bottomrule
\end{tabular}
\end{table*}

\begin{table*}
\centering
\caption{Heteroscedasticity tests on tasks involving the \texttt{wine} data set.}
\label{tab:search-spacehetero-tests-wine}
\begin{tabular}{lllrrrr}
\toprule
Data set & Model & Metric & Fligner Statistic & p-value &  Levene Statistic & p-value \\ 
\midrule
\textbf{Wine} & DT & acc & 127.3 & \textbf{6.912e-09} & 3.553 & \textbf{7.195e-13} \\ 
  & MLP-adam & acc & 85.37 & \textbf{9.945e-04} & 1.874 & \textbf{5.544e-04} \\ 
  & MLP-SGD & acc & 109 & \textbf{1.845e-06} & 2.48 & \textbf{5.701e-07} \\ 
  & RF & acc & 128.5 & \textbf{4.717e-09} & 5.069 & \textbf{2.014e-21} \\ 
  & SVM & acc & 28.73 & 0.9908 & 0.5136 & 0.9975 \\ 
  & ada & acc & 156.6 & \textbf{3.527e-13} & 3.968 & \textbf{3.215e-15} \\ 
  & kNN & acc & 37.67 & 0.8807 & 0.6869 & 0.9473 \\ 
  & lasso & acc & 29.8 & 0.9862 & 0.5981 & 0.9859 \\ 
  & linear & acc & 21.28 & 0.9998 & 0.3839 & 1 \\ 
  & DT & nll & 349.2 & \textbf{6.614e-47} & 10.46 & \textbf{1.115e-48} \\ 
  & MLP-adam & nll & 57.19 & 0.1971 & 1.21 & 0.1646 \\ 
  & MLP-SGD & nll & 110.1 & \textbf{1.362e-06} & 2.597 & \textbf{1.380e-07} \\ 
  & RF & nll & 258 & \textbf{3.660e-30} & 6.468 & \textbf{4.597e-29} \\ 
  & SVM & nll & 57.18 & 0.1975 & 1.006 & 0.4663 \\ 
  & ada & nll & 152.8 & \textbf{1.323e-12} & 3.072 & \textbf{3.555e-10} \\ 
  & kNN & nll & 178.2 & \textbf{1.410e-16} & 5.446 & \textbf{1.635e-23} \\ 
  & lasso & nll & 83.94 & \textbf{1.394e-03} & 1.782 & \textbf{1.416e-03} \\ 
  & linear & nll & 185.8 & \textbf{8.404e-18} & 5.01 & \textbf{4.312e-21} \\ 
\bottomrule
\end{tabular}
\end{table*}

\section{Task-Level Results Breakdown}\label{fig:summary_all_models}

In this section we present the full task-level breakdown of the results with each metric, data set and model combination for each black-box optimiser summarised with the mean and variance achieved across 20 seeds. We show a summary plot in Table~\ref{tab:tasksSummary}.

\begin{table}[h!]
\centering
\caption{Number of tasks for which each optimiser performed best.}
\label{tab:tasksSummary}
\resizebox{\linewidth}{!}{%
\begin{tabular}{rrrrrrrrr}
\toprule
 HEBO &  TuRBO &  PySOT &  Skopt &  Nevergrad (1+1)&  BOHB-BB &  Opentuner &  Hyperopt &  TuRBO+ \\
\midrule
  71 (65.7\%) &     14 (13.0\%) &      7 (6.5 \%) &      5  (4.6 \%)&          4 (3.7 \%)&     3 (2.8\%) &          2 (1.9\%) &         1 (0.9\%) &       1 (0.9\%) \\
\bottomrule
\end{tabular}}
\end{table}

We now present sequentially the full results for each of the 6 datasets: Boston (Section~\ref{sec:boston}), Breast cancer (Section~\ref{sec:Breast cancer dataset}), Diabetes (Section~\ref{sec:diabetes}), Digits (Section~\ref{sec:digits}), Iris (Section~\ref{sec:iris}), Wine (Section~\ref{sec:wine}). For each optimiser we give the mean and variance of the performance metric across all 18 tasks (2 metrics x 9 models) for a given data set.

\newpage

\subsection{Boston Data Set}\label{sec:boston}

\begin{table}[h!]
\centering
\caption{Boston with MAE loss for tuning DT model, averaged over 20 seeds. Optimiser for this task with highest mean normalised score is HEBO.}
% [inline block 0: 108 envs, 123622 chars -> data_tex | \begin{tabular}{llllrr} \toprule...]

\end{table}

% \vskip 0.2in

%theapa unsrt 
% \typeout{}
\bibliographystyle{theapa}
\bibliography{sample}

\end{document}